\newcommand{\comment}[1]{}
\newtheorem{proposition}{Proposition}
\newtheorem{definition}{Definition} 
\title{Rigid Body Dynamic Simulation with Multiple Convex Contact Patches}
\author{Jiayin Xie
    \affiliation{
%	Interacting Robotic Systems Laboratory\\
	Department of Mechanical Engineering\\
	Stony Brook University\\
	Stony Brook, New York 11794\\
    Email: jiayin.xie@stonybrook.edu
    }	
}
\author{Nilanjan Chakraborty \thanks{Address all correspondence to this author.}
    \affiliation
%    Interacting Robotic Systems Laboratory\\
	Department of Mechanical Engineering\\
	Stony Brook University\\
	Stony Brook, New York 11794\\
    Email: nilanjan.chakraborty@stonybrook.edu
    }
\begin{document}

\maketitle    

%To Do List
% Motivating Picture of a robot pushing a three legged object
% Proof of non-penetration is not complete.
% - proof of non-penetration of single patch unclear.
% - we have to explicitly argue that Equations 11 - 15 satisfied at each contact patch will ensure that the two objects are not penetrating each other.
% Planar Sliding Section is Incomplete.
% Change Figure 3a to show an actual object.
% Numerical Results Description has to much clearer. What is the motivation for each of the case studies? Detailed description of each graph. What is the takeaway message from each simulation?

%%%%%%%%%%%%%%%%%%%%%%%%%%%%%%%%%%%%%%%%%%%%%%%%%%%%%%%%%%%%%%%%%%%%%%
\begin{abstract}
{\it We present a principled method for dynamic simulation of rigid bodies in intermittent contact with each other where the contact is assumed to be a non-convex contact patch that can be modeled as a union of convex patches. The prevalent assumption in simulating rigid bodies undergoing intermittent contact with each other is that the contact is a point contact. In recent work, we introduced an approach to simulate contacting rigid bodies with convex contact patches (line and surface contact). In this paper, for non-convex contact patches modeled as a union of convex patches, we formulate a discrete-time mixed complementarity problem where we solve the contact detection and integration of the equations of motion simultaneously. Thus, our method is a geometrically-implicit method and we prove that in our formulation, there is no artificial penetration between the contacting rigid bodies. We solve for the equivalent contact point (ECP) and contact impulse of each contact patch simultaneously along with the state, i.e., configuration and velocity of the objects. We provide empirical evidence to show that if the number of contact patches between two objects is less than or equal to three, the state evolution of the bodies is unique, although the contact impulses and ECP may not be unique. We also present simulation results showing that our method can seamlessly capture transition between different contact modes like non-convex patch to point (or line contact) and vice-versa during simulation. }
\end{abstract}

%%%%%%%%%%%%%%%%%%%%%%%%%%%%%%%%%%%%%%%%%%%%%%%%%%%%%%%%%%%%%%%%%%%%%%
\comment{
\begin{nomenclature}
\entry{A}{You may include nomenclature here.}
\entry{$\alpha$}{There are two arguments for each entry of the nomemclature environment, the symbol and the definition.}
\end{nomenclature}

The spacing between abstract and the text heading is two line spaces.  The primary text heading is  boldface in all capitals, flushed left with the left margin.  The spacing between the  text and the heading is also two line spaces.
}
%%%%%%%%%%%%%%%%%%%%%%%%%%%%%%%%%%%%%%%%%%%%%%%%%%%%%%%%%%%%%%%%%%%%%%
\begin{figure}
\centering
\includegraphics[width=2.5in]{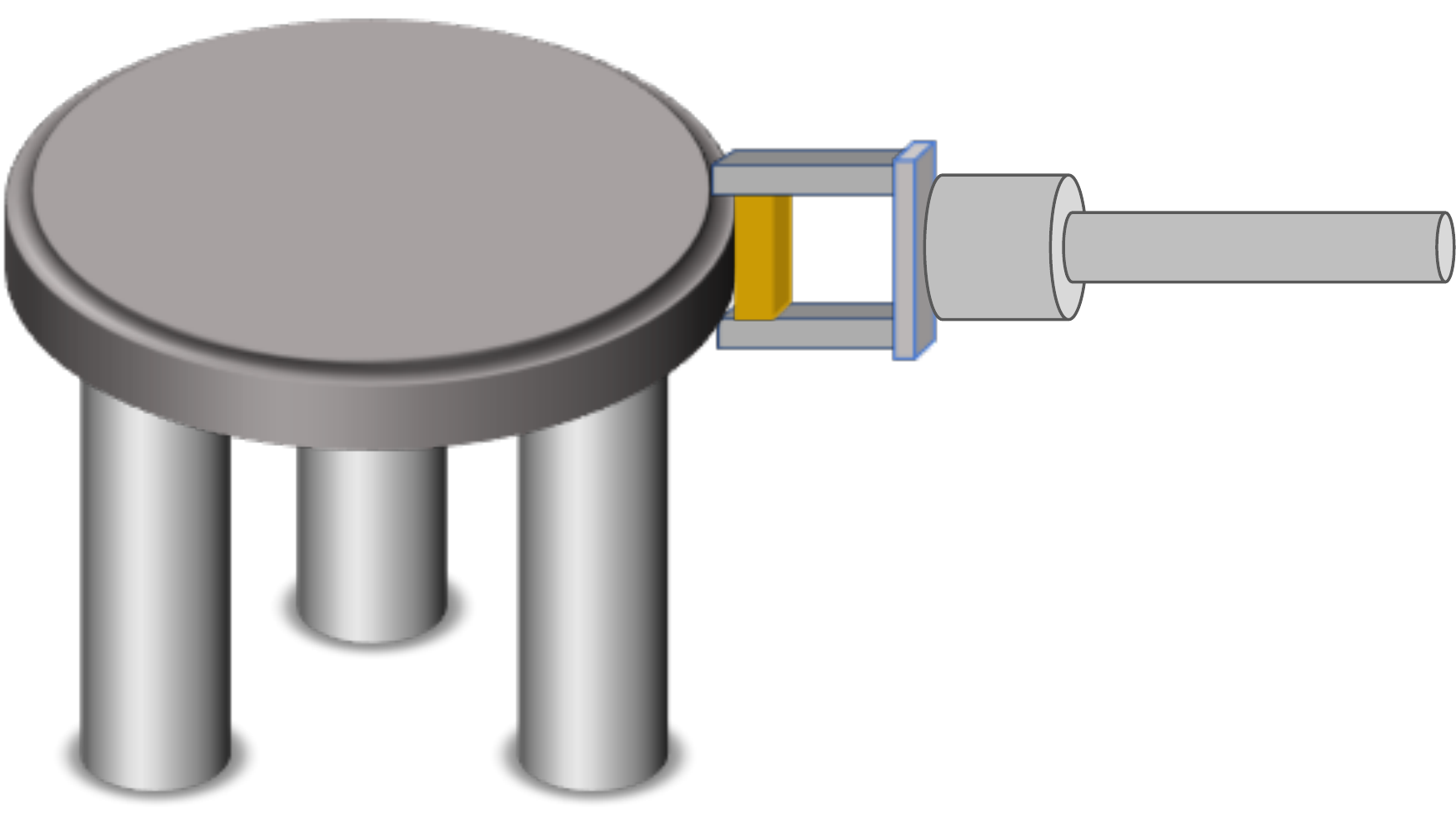}
\caption{A robot pushing a stool with three legs.}

\label{figure:push_stool} 
\end{figure} 
\section*{INTRODUCTION}

Rigid body dynamic simulation is a key enabling technology in solving robotic manipulation~\cite{Kolbert2017,MaD11} and mechanical design problems~\cite{SongTVP04}. 
%Example for mechanical design in industrial automation is the design of part feeding systems~\cite{SongTVP04}, where a component undergoes multiple intermittent contacts with its surrounding environments. 
Robotic manipulation such as prehensile pushing~\cite{Kolbert2017} and in-hand manipulation~\cite{MaD11} involves point and surface contacts between a gripper and a rigid body. Furthermore, the occurrence of multiple intermittent contacts makes the prediction of the motion more complicated. There are applications in which the contact between two objects may be over topologically disconnected regions. For example, Figure~\ref{figure:push_stool} shows a robot with a manipulator pushing a three-legged stool, where the contact between the ground and the stool is a union of three disks. Such situations may arise when a mobile robot with a manipulator is navigating a room and wants to push the movable obstacle (stool) out of its way. State-of-the-art dynamic simulation algorithms usually assume point contact between two objects (except~\cite{Xie2016}), which is clearly violated in Figure~\ref{figure:push_stool}, and there are no well-principled approaches to solve such problems. In this paper, we seek to develop principled algorithms for simulating rigid bodies in intermittent contact where the contacts can be modeled as union of multiple patch contacts.  
%This paper is concerned with geometrically implicit time-stepping scheme for solving the rigid body simulation problems with multiple contact patches.

\begin{figure}
\centering
\includegraphics[width=3.5in]{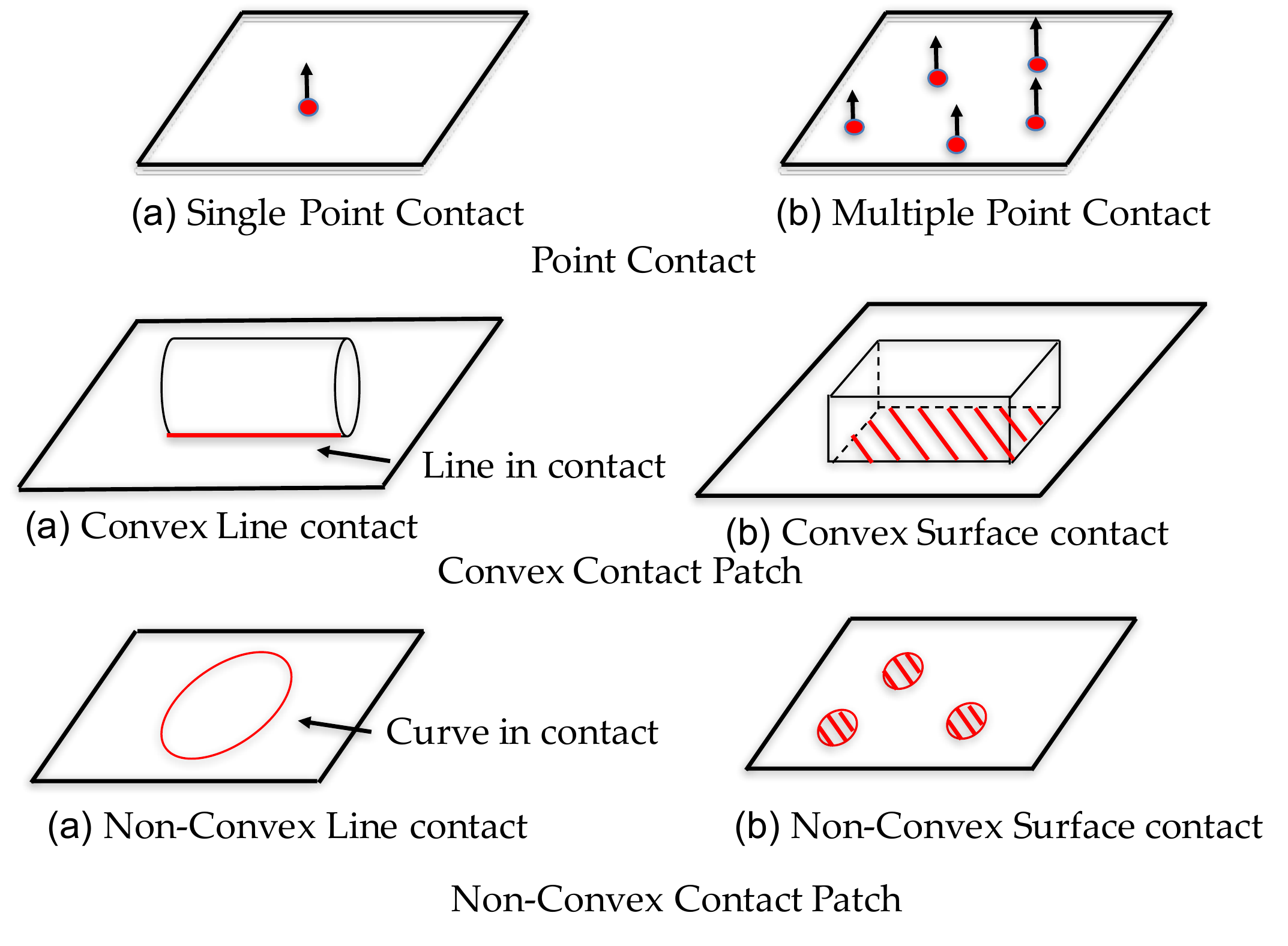}
\caption{Different types of contact between one object with a flat surface. Our focus in this paper is on simulating rigid bodies with type of contact shown in last row, figure (b).}
\label{figure:different_contact_case} 
\end{figure} 

Figure~\ref{figure:different_contact_case} shows the key types of contact between objects. Existing mathematical models for motion of objects with intermittent contact like Differential Algebraic Equation (DAE) models~\cite{Haug1986} and Differential Complementarity Problem (DCP) models~\cite{Cottle2009,Trinkle1997,PfeifferG08} assume the contact between the two objects is a single point contact (top left in Figure~\ref{figure:different_contact_case}). However, for convex contact patch (middle row in Figure~\ref{figure:different_contact_case}), the point contact assumption is not valid. In such case, multiple contacts point are usually chosen in an ad hoc manner, which can lead to inaccuracies in simulation. Recently, we developed an approach~\cite{Xie2016} to simulate contacting rigid bodies with convex contact patches (line and surface contact). In this paper, we focus on the non-convex surface contact problem where the non-convex contact patch that can be modeled as a union of convex patches with flat surface (bottom right of Figure~\ref{figure:different_contact_case} ). Such situations arise when a robot is manipulating objects placed on a horizontal plane.

For single convex contact patch, we know that there exists a unique point on the contact surface where the integral of total moment due to normal force acting on this point is zero. This point is used to model line or surface contact as a point contact and thus it is called the {\em equivalent contact point} (ECP)~\cite{Xie2016}. Using the concept of ECP, in~\cite{Xie2016}, we present a principled method for simulating intermittent contact with convex contact patches (line and surface contact). This method solves for the ECP as well as the contact impulses by incorporating the collision detection within the dynamic simulation time step. This method is called the {\em geometrically implicit time-stepping method} because the geometric information of contact points and contact normal are solved as a part of the numerical integration procedure. In this paper, we extend the method in~\cite{Xie2016} to model union of convex contact patches between two objects in intermittent contact. We use an ECP to model the effect of each contact patch and solve for the ECP and its associated contact wrenches on each contact patch separately. The ECP and contact wrenches are computed simultaneously along with the state of the objects by augmenting the equations of motion of the objects with the contact constraints of non-penetration. We prove that even though we are modeling each contact patch with an equivalent contact point, the contact constraints are always satisfied at the end of the time-step and there is no artificial penetration between the objects. Through simulation studies, we present empirical evidence that for less than or equal to three contact patches, although the contact wrenches and ECP may not be unique, the state of the object is unique (this is different from a single convex contact patch where the ECP and contact wrenches are unique). Furthermore, for pure translation, we prove that the state of the object at the end of time step as well as contact impulses can be computed analytically. We also present simulation results showing that our method allows seamless transition between multiple patch contacts to point or line contacts and vice-versa.
%Furthermore, it is guaranteed that there will be no penetration between objects at the end of time step.

%Our contribution are as follows: (a) We show that the method in \cite{Xie2016} can be used to solve the problem with multiple convex patches. (b) For 3D surface contact with pure translation, we derive the closed form equations for the state of objects and the constraints for ECP and its associated contact impulses in each contact patch. (c) For general 3D surface contact, we present the numerical results to show that although the contact wrenches and ECP on each contact patch may not be unique, the state of object is unique. (d)We prove that our method guarantees non-penetration between objects.
%%%%%%%%%%%%%%%%%%%%%%%%%%%%%%%%%%%%%%%%%%%%%%%%%%%%%%%%%%%%%%%%%%%%%%
\section*{RELATED WORK}

We model the continuous time dynamics of rigid bodies that are in intermittent contact with each other as a Differential Complementarity problem (DCP).  
Let $\bm{u}\in \mathbb{R}^{n_1}$,  $\bm{v}\in \mathbb{R}^{n_2}$ and let $\bm{g}$ :$ \mathbb{R}^{n_1}\times \mathbb{R}^{n_2} \rightarrow \mathbb{R}^{n_1} $, $\bm{f}$ : $ \mathbb{R}^{n_1}\times \mathbb{R}^{n_2} \rightarrow \mathbb{R}^{n_2}$ be two vector functions and the notation $0 \le \bm{x} \perp \bm{y} \ge 0$ imply that $\bm{x}$ is orthogonal to $\bm{y}$ and each component of the vectors is non-negative. 
\begin{definition}
\cite{Facchinei2007} The differential (or dynamic) complementarity problem is to find $\bm{u}$ and $\bm{v}$ satisfying
$$\dot{\bm{u}} = \bm{g}(\bm{u},\bm{v}), \ \ \ 0\le \bm{v} \perp \bm{f}(\bm{u},\bm{v}) \ge 0 $$
\end{definition}
\begin{definition}
The mixed complementarity problem is to find $u$ and $v$ satisfying
$$\bm{g}(\bm{u},\bm{v})=0, \ \ \ 0\le \bm{v} \perp \bm{f}(\bm{u},\bm{v}) \ge 0.$$
If the functions $\bm{f}$ and $\bm{g}$ are linear, the problem is called a mixed linear complementarity problem (MLCP), otherwise, the problem is called a mixed nonlinear complementarity problem (MNCP). Our continuous time dynamics model is a DCP whereas our discrete-time dynamics model is a MNCP. 
\end{definition}

The DCP model formulates the intermittent contact between bodies in motion as a complementarity constraint~\cite{Lotstedt82, AnitescuCP96, Pang1996, StewartT96, Liu2005, Drumwright2012, Todorov2014}. 
%A substantial amount of effort in modeling and dynamic simulation with complementarity constraints~\cite{AnitescuCP96, Pang1996, StewartT96, Liu2005, Drumwright2012, Todorov2014}. 
DCP models are solved numerically with time-stepping schemes. The time-stepping problem is: {\em given the state of the system and applied forces, compute an approximation of the system one time step into the future. } Solving this problem repeatedly will give an approximate solution to the equations of motion of the system. There are different assumptions for forming the discrete equations of motion, which makes the system Mixed Linear Complementarity problem (MLCP)~\cite{AnitescuP97, AnitescuP02} or mixed non-linear complementarity problem (MNCP)~\cite{Tzitzouris01,NilanjanChakraborty2007}. The MLCP problem linearizes the friction cone constraints and the distance function between two bodies (which is a nonlinear function of the configuration), sacrificing accuracy for speed. Depending on whether the distance function is approximated, the time-stepping schemes can also be divided into geometrically explicit schemes~\cite{AnitescuCP96, StewartT96} and geometrically implicit schemes~\cite{Tzitzouris01}. In geometrically explicit schemes, at the current state, a collision detection routine is called to determine separation or penetration distances between the  bodies, but this information is not incorporated as a function of the unknown future state at the end of the current time step. A goal of a typical time-stepping scheme is to guarantee consistency of the dynamic equations and all model constraints at the end of each time step. However, since the geometric information is obtained and approximated only at the start of the current time-step,
then the solution will be in error. Thus, in~\cite{Xie2016,NilanjanChakraborty2007}, we use a geometrically implicit time stepping scheme for solving convex contact patches problem, which is also the method used in this paper. The resulting discrete time problem is a MNCP.

%A common aspect of most of the time stepping schemes mentioned above is that they assume the contact between the two objects to be a point contact. In many application scenarios in grasping and manipulation, non-convex contact patch with friction is ubiquitous. Therefore much attempt has been made to model and understand the effect of non-point frictional contact in robotic manipulation~\cite{Erdmann1994, Goyal1991, Howe1996}. Building on these friction models, there is much work in robotic manipulation based on quasi-static assumption where the inertial forces can be neglected with respect to the friction forces~\cite{KaoC92, Lynch1996, Peshkin1988, Peshkin1988a}. The goal in this paper is to use a friction model for general dynamic simulation, where the quasi-static assumption may not necessary be valid. Since we want a general model that considers moments due to the distributed friction force, we use a friction model that is a generalization of Coulomb's friction and is known as the soft-finger contact model in robotics~\cite{MLS94}.
\comment{
\begin{figure*}%
\centering
\begin{subfigure}{0.50\columnwidth}
\includegraphics[width=\columnwidth]{point_contact_a}%
\caption{Object $F$ has point contact with object $G$. }
\end{subfigure} \quad
\begin{subfigure}{0.55\columnwidth}
\includegraphics[width=\columnwidth]{line_contact_a}%
\caption{Object $F$ has line contact with object $G$. }
\end{subfigure} \quad
\begin{subfigure}{0.55\columnwidth}
\includegraphics[width=\columnwidth]{surface_contact}%
\caption{Object $F$ has surface contact with object $G$.}
\end{subfigure}\\%
\begin{subfigure}{0.50\columnwidth}
\includegraphics[width=\columnwidth]{point_contact_b}%
\caption{Object $F$ has point contact with object $G$.}
\end{subfigure} \qquad
\begin{subfigure}{0.60\columnwidth}
\includegraphics[width=\columnwidth]{line_contact_b}%
\caption{Object $F$ has line contact with object $G$.}
\end{subfigure}%
\caption{Objects described by Multiple convex inequalities.}
\label{figure:contact_nonpenetration_multiple}
\end{figure*}
}

%%%%%%%%%%%%%%%%%%%%%%%%%%%%%%%%%%%%%%%%%%%%%%%%%%%%%%%%%%%%%%%%%%%%%%
\section*{DYNAMIC MODEL}
\label{sec:dynmod}
We now present the geometrically implicit optimization-based time-stepping scheme for modeling the dynamic simulation with multiple intermittent unilateral contacts between two objects.
Note that a contact between two objects is a union of multiple convex contact patches.
%The dynamic model with one intermittent unilateral contact has been presented in our  previous work~\cite{Xie2016} and in this paper we extend the model to be with multiple contacts. 
The dynamic model includes (a) Newton-Euler equations (b) kinematic map relating the generalized velocities to the linear and angular velocities (c) friction law for each contact patch (d) contact constraints incorporating the geometry of the contact patches. 

%multiple (convex) patches contact
We will introduce the notations and write the equations of motion for a single object in contact with another object. The vector describing the position of the center of mass and the orientation of the object is $\bm{q}$  ($\bm{q}$ can be $6 \times 1$ or $7\times 1$ vector depending on the representation of the orientation). For numerical simulation we use unit quaternion to represent the orientation.
Let $\bm{\nu}$ be the generalized velocity concatenating the linear ($\bm{v}$) and spatial angular ($^s\bm{\omega}$) velocities. The total number of contact patches is $n_c$. For each contact patch $i$, let
$\lambda_{n_i}$ ($p_{n_i}$) be the magnitude of normal contact force (impulse),
$\lambda_{t_i}$ ($p_{t_i}$) and $\lambda_{o_i}$ ($p_{o_i}$) be the orthogonal components of the friction force (impulse) on the tangential plane, and
$\lambda_{r_i}$ ($p_{r_i}$) be the frictional force (impulse) moment about the contact normal.

%\noindent
\section*{ Newton-Euler Equations}
The  Newton-Euler equations are as follows:
\begin{equation} 
\begin{aligned}
\label{eq1}
\bm{M}(\bm{q})
{\dot{\bm{\nu}}} &= 
\sum_{i = 1}^{n_c}\bm{W}_{n_i}\lambda_{n_i}+
\sum_{i = 1}^{n_c}\bm{W}_{t_i} \lambda_{t_i}+
\sum_{i = 1}^{n_c}\bm{W}_{o_i} \lambda_{o_i}
\\&+\sum_{i = 1}^{n_c}\bm{W}_{r_i}\lambda_{r_i}+
\bm{\lambda}_{app}+\bm{\lambda}_{vp}
\end{aligned}
\end{equation}
\noindent
where $\bm{M}(\bm{q}) = \left[ \begin{matrix} &m \bm{I}_3 \ &0\\ &0 \ &{^s\mathcal{I}}_{cm}\end{matrix}\right]$ is a symmetric, positive definite $6 \times 6$ matrix, which contains mass matrix $m \bm{I}_3$ ($\bm{I}_3$ is a $3 \times 3$ identity matrix) and  inertia matrix ${^s\mathcal{I}}_{cm} = \bm{R} \mathcal{I}_{cm} \bm{R}^T$. Here $\bm{R}$ is the $3 \times 3$ rotation matrix from body frame to world frame and $\mathcal{I}_{cm}$ is the inertia matrix in the body frame. $\bm{\lambda}_{app}$ is the  $6 \times 1$ vector of external forces (including gravity) and moments, $\bm{\lambda}_{vp}$ is the $6 \times 1$ vector of Coriolis and centripetal forces, $\sum_{i = 1}^{n_c}\bm{W}_{n_i}\lambda_{n_i}$, $\sum_{i = 1}^{n_c}\bm{W}_{t_i}\lambda_{t_i}$, $\sum_{i = 1}^{n_c}\bm{W}_{o_i}\lambda_{o_i}$ and $
\sum_{i = 1}^{n_c}\bm {W}_{r_i}\lambda_{r_i}$ are the sum of wrenches of the normal contact forces, frictional contact forces, and frictional moments on each contact patch. And $n_c$ is the total number of contact. 
Let $(\bm{n}_i,\bm{t}_i,\bm{o}_i)$ be unit vectors of the contact frame and $\bm{r}_i$ be the vector from center of gravity to the ECP of $i$th contact patch, expressed in the world frame.
\begin{equation}
\begin{aligned}
\label{equation:wrenches}
\bm{W}_{n_i} =  \left [ \begin{matrix} 
\bm{n}_i\\
\bm{r}_i\times \bm{n}_i
\end{matrix}\right]
\quad 
\bm{W}_{t_i} =  \left [ \begin{matrix} 
\bm{t}_i\\
\bm{r}_i\times \bm{t}_i
\end{matrix}\right]
\\
\bm{W}_{o_i} =  \left [ \begin{matrix} 
\bm{o}_i\\
\bm{r}_i\times \bm{o}_i
\end{matrix}\right]
\quad 
\bm{W}_{r_i} =  \left [ \begin{matrix} 
\bm{0}\\
\ \ \bm{n}_i \ \
\end{matrix}\right]
\end{aligned}
\end{equation}

To discretize Equation~\eqref{eq1}, we use a backward Euler time-stepping scheme. Let $t^u$ denote the current time and $h$ be the time step, the superscript $u$ represents the beginning of the current time and the superscript $u+1$ represents the end of the current time.
Let $\dot{\bm{\nu}} \approx ( {\bm{\nu}}^{u+1} -{\bm{\nu}}^{u} )/h$, Equation~\eqref{eq1} becomes:
\begin{equation}
\begin{aligned}
\label{equ:discrete_NE}
\bm{M}^{u} {\bm{\nu}}^{u+1} &= 
\bm{M}^{u}{\bm{\nu}}^{u}+\sum_{i = 1}^{n_c}\bm{W}_{n_i}^{u+1}p^{u+1}_{n_i}+\sum_{i = 1}^{n_c} \bm{W}_{t_i}^{u+1}p^{u+1}_{t_i} \\&
+\sum_{i = 1}^{n_c}\bm{W}_{o_i}^{u+1}p^{u+1}_{o_i}+\sum_{i = 1}^{n_c}\bm{W}_{r_i}^{u+1}p^{u+1}_{r_i}+\bm{p}^{u}_{app}+\bm{p}^{u}_{vp}\\
\end{aligned}
\end{equation}
where all the forces in Equation~\eqref{eq1} becomes impulses.

\subsection*{Kinematic Map}
The kinematic map is given by $\bm{\dot{q}} = \bm{G}(\bm{q}) \bm{\nu}$
%\begin{equation}
%\bm{\dot{q}} = \bm{G}(\bm{q}) \bm{\nu}
%\end{equation}
where $\bm{G}$ is the matrix mapping the generalized velocity of the body to the time derivative of the position and the orientation.
To discretizee the above equation, let $\dot{\bm{q}} \approx( {\bm{q}}^{u+1} -{\bm{q}}^{u} )/h$. Therefore, 
\begin{equation}
\label{equ:kinematic}
\bm{q}^{u+1} =\bm{q}^{u}+h\bm{G}(\bm{q}^u)\bm{\nu}^{u+1}
\end{equation}

\subsection*{Friction Model for each contact patch}
We use a friction model for each contact patch that is based on the maximum power dissipation principle and generalizes Coulomb's friction law. It is given by 
\begin{equation}
\begin{aligned}
&{\rm max} \quad -(v_{t_i} p_{t_i} + v_{o_i}p_{o_i} + v_{r_i} p_{r_i})\\
{\rm s.t.} \quad \left(\frac{p_{t_i}}{e_{t_i}}\right)^2 &+\left(\frac{p_{o_i}}{e_{o_i}}\right)^2+\left(\frac{p_{r_i}}{e_{r_i}}\right)^2 - \mu_i^2 p_{n_i}^2 \le 0, \quad i = 1,...,n_c.
\end{aligned}
\end{equation}
\noindent
where $v_{t_i}$ and $v_{o_i}$ are the tangential components of the relative velocity at the ECP of contact patch $i$, $v_{r_i}$ is the relative angular velocity about the normal at the contact patch $i$.
Let $e_{t_i},e_{o_i}$ and $e_{r_i}$ be the given positive constants defining the friction ellipsoid for contact patch $i$ and let $\mu_i$ represents the coefficient of friction at the patch $i$ \cite{Howe1996, Trinkle1997}. This constraint is the elliptic dry friction condition suggested in \cite{Howe1996} based upon evidence from a series of contact experiments. This model states that among all the possible contact forces and moments that lie within the friction ellipsoid, the forces and moment that maximize the power dissipation at the contact (due to friction) are selected.
 
This argmax formulation of the friction law has a useful alternative formulation~\cite{trinkle2001dynamic}
 \begin{equation}
\begin{aligned}
\label{equation:friction}
0&=
e^{2}_{t_i}\mu_i p_{n_i} 
\bm{W}^{T}_{t_i}\cdot
\bm{\nu}^{u+1}+
p_{t_i}\sigma_i\\
0&=
e^{2}_{o_i}\mu_i p_{n_i}  
\bm{W}^{T}_{o_i}\cdot
\bm{\nu}^{u+1}+p_{o_i}\sigma_i\\
0&=
e^{2}_{_ir}\mu_i p_{n_i}\bm{W}^{T}_{r_i}\cdot
\bm{\nu}^{u+1}+p_{r_i}\sigma_i\\
\end{aligned}
\end{equation}
\begin{equation}
\label{equation:friction_c}
0 \le \mu_i^2p_{n_i}^2- p_{t_i}^2/e^{2}_{t_i}- p_{o_i}^2/e^{2}_{o_i}- p_{r_i}^2/e^{2}_{r_i} \perp \sigma_i \ge 0
\end{equation}
where $\sigma_i$ is the magnitude of the slip velocity on contact patch $i$.

\subsection*{Non-penetration constraint for each contact patch}
In complementarity-based formulation of dynamics, the contact constraint for each potential contact is written as 
\begin{equation} \label{equation:normal contact}
0\le \lambda_{n_i} \perp \psi_{n_i}(
\bm{q},t) \ge 0
\end{equation}
where $i = 1,...,n_c$. $\lambda_{n_i}$ is the magnitude of normal contact force at $i$th contact. Here, $\psi_{n_i}(
\bm{q},t)$ is the gap function for $i$th contact with the property $\psi_{n_i}(\bm{q},t) > 0$ for separation, $\psi_{n_i}(\bm{q},t) = 0$ for touching and $\psi_{n_i}(\bm{q},t) < 0$ for inter-penetration. Since $\psi_{n_i}(\bm{q},t)$ usually has no closed form expression, and the contact constraints should be satisfied at the end of the time step, state-of-the-art time steppers~\cite{NME:NME1049, Liu2005, Stewart2000} do the following: (a) use a collision detection algorithm to get the closest point at the beginning of the time-step (b) approximate the distance function at the end of the time step using a first order Taylor's series expansion. Thus, the time-steppers are explicit in the geometric information and the collision detection step is decoupled from the dynamics solution step, where the state of the system and the contact wrenches are computed. In~\cite{NilanjanChakraborty2007}, the authors discussed the limitations of such an approach in terms of undesired inter-penetration between rigid objects, and introduced a method whereby the geometry of the bodies are included in the equations of motion, so that simulation with no artificial inter-penetration can be guaranteed. 

Previous models including~\cite{NilanjanChakraborty2007}, assume point contact. In~\cite{Xie2016}, the authors develop a principled method to model single convex contact patch (Figure (a) and (b) in second row of Figure~\ref{figure:different_contact_case}). They use geometrically implicit time-stepping method from~\cite{NilanjanChakraborty2007} to solve for the equivalent contact point (ECP) on the contact surface, its associated wrench and configurations of the object simultaneously, thus, making the problem well-posed.
In this paper, we extend the method presented in~\cite{Xie2016} to model contact problem with union of convex contact patches. We use ECP to model each contact patch separately, and solve them with their associated contact wrenches and configurations of the objects simultaneously. In the subsequent sections, we provide empirical evidence to show that if the number of contact patches is less than or equal to three, although the ECP and its associated wrenches at each patch are not unique, the state of the objects is unique. 

The guarantee of non-penetration is valid for single point contact between two objects. We need to prove that the guarantee of non-penetration is valid for multiple contact patches. In the next section, we discuss the geometrically implicit method in detail and prove that this method guarantees non-penetration for each convex contact patch and therefore there will be no inter-penetration between two objects with multiple contact patches. 

\section*{CONTACT CONSTRAINTS}
\begin{figure}
\centering
\includegraphics[width=3in]{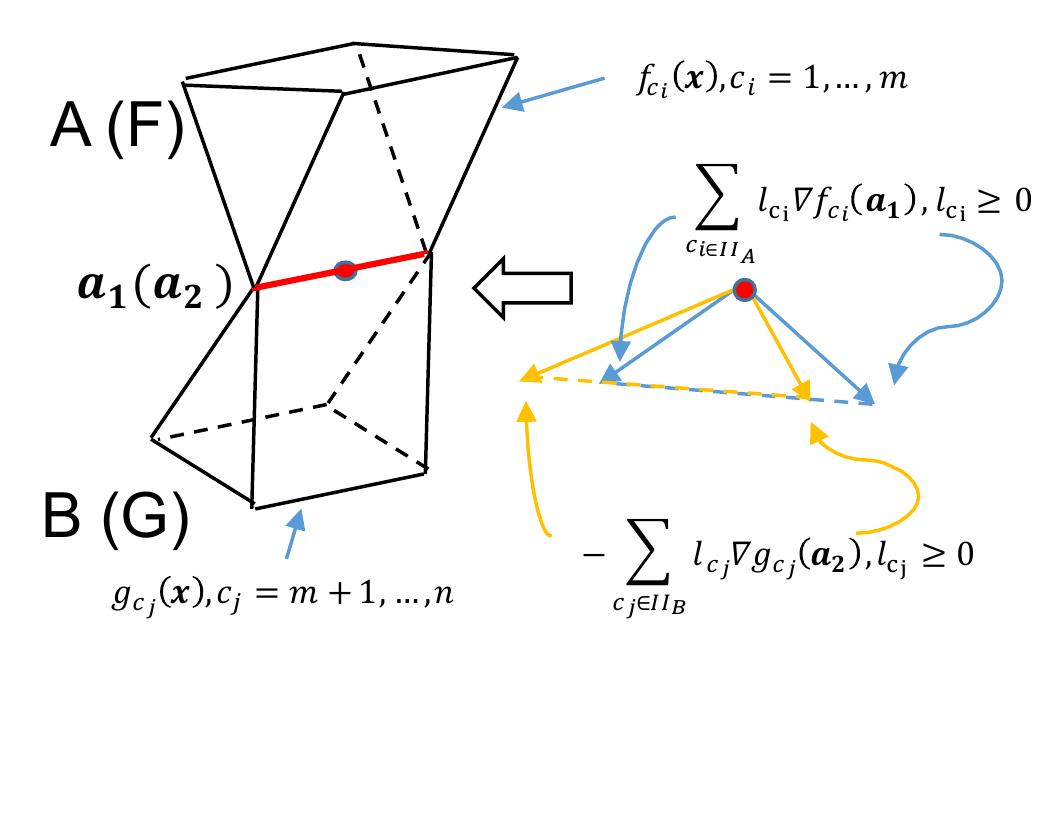}
\caption{Line contact between two convex bodies from object F and G respectively.}
\label{figure:contact_nonpenetration} 
\end{figure} 

We consider two objects F and G, that are modeled by the union of convex bodies. Thus, one or multiple pairs of convex bodies from F and G  can potentially have contact. When a pair of bodies have contact, there is a convex contact patch between them, that can be point, line or surface contact. Therefore, the non-convex contact patches between F and G can be modeled as union of convex patches.

\subsection*{Contact constraints for a single contact patch}
Let us consider a single convex contact patch. As Figure~\ref{figure:contact_nonpenetration} shows, convex body A (one body composing object F) and B (one body composing object G) can be described by intersection of convex inequalities $f_{c_i}(\bm{x}) \le 0, c_i = 1,...,m$ and $g_{c_j}(\bm{x}) \le 0, c_j=m+1,...,n$ respectively. We define $\bm{a}_1$ as the closest point (or ECP) on convex body A  and $\bm{a}_2$ on body B. Because the normal on  $\bm{a}_1$ or $\bm{a}_2$ may not be uniquely defined, so we use normal cones $\mathcal{C}(\bm{A},\bm{a}_1) = \sum_{c_i \in II_A}l_{c_i}\nabla f_{c_i}(\bm{a}_1)$ and $\mathcal{C}(\bm{B},\bm{a}_2) = \sum_{c_j \in II_B}l_{c_j}\nabla g_{c_j}(\bm{a}_2)$ to represent any vector that lies within the cone. The explanation of normal cone is presented in detail at appendix A. 

Since the closest point is outside the body if it is outside at least one of the intersecting surfaces forming the body, the contact complementarity Equation~\eqref{equation:normal contact} can be written as~\cite{NilanjanChakraborty2007}:
\begin{equation}
\begin{aligned}
\label{equation:contact_multiple_comp}
0 \le \lambda_{n} \perp \mathop{max}_{c_i=1,...,m} f_{c_i}(\bm{a}_2) \ge 0\\
0 \le \lambda_{n} \perp \mathop{max}_{c_j=m+1,...,n}g_{c_j}(\bm{a}_1) \ge 0
\end{aligned}
\end{equation}
The solution of the closest points (or ECPs) $\bm{a}_1$ and $\bm{a}_2$ is given by the following minimization problem~\cite{NilanjanChakraborty2007}:
\begin{equation}
\label{equation:optimazation}
(\bm{a}_1,\bm{a}_2) = arg \min_{\bm{\zeta}_1,\bm{\zeta}_2}\{ \|\bm{\zeta}_1-\bm{\zeta}_2 \| \ f_{c_i}(\bm{\zeta}_1) \le 0,\ g_{c_j}(\bm{\zeta}_2) \le 0 \}
\end{equation}
where $c_i=1,...,m$ and $c_j=m+1,...,n$.

Using a slight modification of the KKT conditions for the optimization problem in Equation~\eqref{equation:optimazation}, the closest points (or ECP) should satisfy the following equations:
\begin{align}
\label{equation:re_contact_multiple_1}
\bm{a}_1-\bm{a}_2 = -l_{k_1}(\nabla f_{k_1}(\bm{a}_1)+\sum_{c_i = 1,c_i\neq k_1}^m l_{c_i}\nabla f_{c_i}(\bm{a}_1))\\
\label{equation:re_contact_multiple_2}
\nabla f_{k_1}(\bm{a}_1)+\sum_{c_i = 1,c_i\neq k_1}^m l_{c_i}\nabla f_{c_i}(\bm{a}_1)= -\sum_{c_j = m+1}^n l_{c_j} \nabla g_{c_j} (\bm{a}_2)\\
\label{equation:re_contact_multiple_3}
0 \le l_{c_i} \perp -f_{c_i}(\bm{a}_1) \ge 0 \quad c_i = 1,..,m\\
\label{equation:re_contact_multiple_4}
0 \le l_{c_j} \perp -g_{c_j}(\bm{a}_2) \ge 0 \quad c_j = m+1,...,n
\end{align}
Where $k_1$ represents the index of any one of the active constraints (i.e., the surface on which the closest point lies). We will also need an additional complementarity constraint (any one of the two equations in~\eqref{equation:contact_multiple_comp}) to prevent penetration:
\begin{align}
\label{equation:re_contact_multiple_5}
0 \le \lambda_{n} \perp \mathop{max}_{c_i=1,...,m} f_{c_i}(\bm{a}_2) \ge 0
\end{align}

Note that Equations~\eqref{equation:re_contact_multiple_1} to \eqref{equation:re_contact_multiple_4} are not exactly the KKT conditions of the optimization problem in Equation~\eqref{equation:optimazation} but can be derived from the KKT conditions. This derivation is presented in detail in~\cite{NilanjanChakraborty2007} and is therefore omitted here.

In the proof below, we use separating hyperplane theorem (the detail is presented in appendix B) which states that: two convex non-empty objects can have a common supporting hyperplane at a point which lie on the common region if and only if their interiors are disjoint. The common region is where two objects' boundaries touch or intersect. If objects touch without intersection, common region represents contact patch. 

When distance between bodies A and B is zero, the supporting hyperplane is defined by the normal cone of 
the point lying on the common region. The common region can be point, line segment or surface and we can define one equivalent normal cone for the surface or line segment (see appendix A). Thus the common supporting hyperplane can be defined as the intersection of normal cone $\mathcal{C}(\bm{A},\bm{a}_1)$ and $-\mathcal{C}(\bm{B},\bm{a}_2)$, where $\bm{a}_1, \bm{a}_2$ can be any point that lie on the common region. If $\mathcal{C}(\bm{A},\bm{a}_1) \cap -\mathcal{C}(\bm{B},\bm{a}_2) \neq \emptyset$, there exists common supporting hyperplane, and A and B will not intersect with each other.
\begin{proposition}{When using Equations~\eqref{equation:re_contact_multiple_1} $\sim$~\eqref{equation:re_contact_multiple_5} to model one single convex contact patch, we get the solution for ECPs as the closest points on the boundary of their associated bodies when bodies are separate from each other and we get only touching solution when distance between bodies is zero.  }
\end{proposition}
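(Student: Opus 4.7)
The plan is to treat the two cases in the proposition separately, exploiting the fact that Equations~\eqref{equation:re_contact_multiple_1}--\eqref{equation:re_contact_multiple_4} are essentially the KKT stationarity and primal/dual feasibility conditions for the minimum-distance optimization~\eqref{equation:optimazation}, rewritten by singling out one active constraint $f_{k_1}(\bm{a}_1)=0$ on body $A$ and absorbing its Lagrange multiplier into the scalar $l_{k_1}$ in~\eqref{equation:re_contact_multiple_1}. With this correspondence in mind, the work reduces to (a)~identifying which branch of the complementarity~\eqref{equation:re_contact_multiple_5} is consistent with the relative position of $A$ and $B$, and (b)~reading off the geometric meaning of the resulting stationary point.

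First I would handle the separated case. If $A$ and $B$ are disjoint, then any $\bm{a}_2\in B$ lies strictly outside $A$, so at least one inequality $f_{c_i}(\bm{a}_2)>0$ holds and $\max_{c_i}f_{c_i}(\bm{a}_2)>0$. The complementarity~\eqref{equation:re_contact_multiple_5} then forces $\lambda_n=0$, leaving Equations~\eqref{equation:re_contact_multiple_1}--\eqref{equation:re_contact_multiple_4} as the pure KKT system for the convex program~\eqref{equation:optimazation}. Convexity of the feasible sets and of the squared distance along $\bm{\zeta}_1-\bm{\zeta}_2$ (which is non-zero since $\bm{a}_1\neq\bm{a}_2$) make this system necessary and sufficient for $(\bm{a}_1,\bm{a}_2)$ to be the pair of closest points on $\partial A$ and $\partial B$, giving the desired conclusion.

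Next I would treat the touching case. The key step is to exclude interpenetration: if $\bm{a}_2$ were in the strict interior of $A$ then $f_{c_i}(\bm{a}_2)<0$ for every $c_i$, so $\max_{c_i}f_{c_i}(\bm{a}_2)<0$, violating the sign condition in~\eqref{equation:re_contact_multiple_5}. Hence $\bm{a}_2\in\partial A$, and by~\eqref{equation:re_contact_multiple_3} the active constraints place $\bm{a}_1\in\partial A$ as well. Using~\eqref{equation:re_contact_multiple_2}, the vector $\nabla f_{k_1}(\bm{a}_1)+\sum_{c_i\neq k_1}l_{c_i}\nabla f_{c_i}(\bm{a}_1)$ lies in $\mathcal{C}(\bm{A},\bm{a}_1)$ while its negative $\sum_{c_j}l_{c_j}\nabla g_{c_j}(\bm{a}_2)$ lies in $\mathcal{C}(\bm{B},\bm{a}_2)$, so $\mathcal{C}(\bm{A},\bm{a}_1)\cap\bigl(-\mathcal{C}(\bm{B},\bm{a}_2)\bigr)\neq\emptyset$. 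By the separating hyperplane theorem of Appendix~B this common supporting direction precludes intersecting interiors, and Equation~\eqref{equation:re_contact_multiple_1} then collapses (either $l_{k_1}=0$ or the common normal direction is balanced by $\bm{a}_1=\bm{a}_2$) to a genuine touching configuration with $\bm{a}_1$ and $\bm{a}_2$ coinciding on the contact region.

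The main obstacle I expect is the touching case: showing that~\eqref{equation:re_contact_multiple_1}--\eqref{equation:re_contact_multiple_2} together with the non-penetration complementarity~\eqref{equation:re_contact_multiple_5} really drive the ECPs to coincide on $\partial A\cap\partial B$, rather than admit a spurious stationary pair with $\bm{a}_1\neq\bm{a}_2$. The subtlety is that when $\lambda_n>0$ the scaling $l_{k_1}$ need not vanish, so the argument must lean on the normal-cone identity from~\eqref{equation:re_contact_multiple_2} and the strict-separation conclusion of the hyperplane theorem to show that the only feasible configuration is the coincident one.
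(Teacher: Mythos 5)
Your overall strategy matches the paper's: use the complementarity condition \eqref{equation:re_contact_multiple_5} to rule out interior placements, read the normal-cone memberships off \eqref{equation:re_contact_multiple_2}, and invoke the separating hyperplane theorem of Appendix~B. The separated case is handled correctly. The touching case, however, has two concrete gaps. First, your claim that ``by \eqref{equation:re_contact_multiple_3} the active constraints place $\bm{a}_1\in\partial A$'' does not follow: \eqref{equation:re_contact_multiple_3} by itself is perfectly consistent with $\bm{a}_1$ interior to $A$ and all $l_{c_i}=0$. The paper closes this by chaining through \eqref{equation:re_contact_multiple_1}: if every $l_{c_i}$ vanishes, then \eqref{equation:re_contact_multiple_1} forces $\bm{a}_1=\bm{a}_2$, whence $f_{c_i}(\bm{a}_2)=f_{c_i}(\bm{a}_1)<0$ for all $c_i$, contradicting \eqref{equation:re_contact_multiple_5}. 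Second, you never establish $\bm{a}_2\in\partial B$; what you derive from \eqref{equation:re_contact_multiple_5} is only $\bm{a}_2\notin\mathrm{int}\,A$, a statement about the \emph{other} body. This matters: if $\bm{a}_2$ were interior to $B$, then \eqref{equation:re_contact_multiple_4} gives $l_{c_j}=0$ for all $c_j$, the right-hand side of \eqref{equation:re_contact_multiple_2} is the zero vector, and your ``nonempty intersection of normal cones'' degenerates to $\{\bm{0}\}$, which defines no supporting hyperplane. The paper rules this out by noting that the left-hand side of \eqref{equation:re_contact_multiple_2} is nonzero (it contains the gradient of an active constraint at the boundary point $\bm{a}_1$), so equality with a zero right-hand side is a contradiction.

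A smaller point: your closing step, in which \eqref{equation:re_contact_multiple_1} ``collapses'' to $\bm{a}_1=\bm{a}_2$, is both vague and not the load-bearing part of the argument. As Appendix~B itself notes, for line or surface contact the ECPs need not coincide; what the proposition actually requires is the pair of facts above --- each ECP lies on the boundary of its own body, and $\mathcal{C}(\bm{A},\bm{a}_1)\cap-\mathcal{C}(\bm{B},\bm{a}_2)$ contains a nonzero vector --- after which the separating hyperplane theorem delivers non-penetration. Your ``main obstacle'' paragraph therefore worries about the wrong difficulty: the subtlety is not forcing the ECPs to coincide but ensuring the normal cones at those points are nontrivial, which is precisely where your boundary arguments are incomplete.
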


\begin{proof}
First, when two bodies are separate, $\bm{a}_1 \neq \bm{a}_2$, and Equations~\eqref{equation:re_contact_multiple_1} $\sim$~\eqref{equation:re_contact_multiple_5} will give us the solution for ECPs $\bm{a}_1$ and $\bm{a}_2$ as the closet points on the boundary of bodies. The proof is same as in ~\cite{NilanjanChakraborty2007}. 
%we know $\bm{a}_1 \neq \bm{a}_2$. From equation~\ref{equation:re_contact_multiple_KKT_1} and~\ref{equation:re_contact_multiple_KKT_3}, there is at least one $\hat{l}_i$ is strictly positive, and the corresponding constraint is active. Similarly, from equation~\ref{equation:re_contact_multiple_KKT_2} and~\ref{equation:re_contact_multiple_KKT_4}, there is at least one $\hat{l}_j$ is strictly positive. We define $II = II_F \cup II_G$ as the index set of active constraints, where $II_F$ is the active constraint for object $F$ and $II_G$ is the active constraint for object $G$. Let $k_1 \in II_F$ be the index of one active constraint on object $F$ (shown in figure~\ref{figure:contact_nonpenetration_multiple}, the surfaces on which ECP lies). Let $l_{k_1} = \hat{l}_{k_1}, l_i = \frac{\hat{l}_i}{\hat{l}_{k_1}}, i = 1,...,m, i \neq k_1, l_j = \frac{\hat{l}_j}{\hat{l}_{k_1}}, j = m+1,...,n$, we found equations~\ref{equation:re_contact_multiple_1} $\sim$~\ref{equation:re_contact_multiple_4} are equivalent to equation~\ref{equation:re_contact_multiple_KKT_1} $\sim$~\ref{equation:re_contact_multiple_KKT_4}.

For the case when distance between two bodies are not separate, the two bodies either touch each other without penetration or they intersect with each other. To show, the KKT conditions ~\eqref{equation:re_contact_multiple_1} to \eqref{equation:re_contact_multiple_4} will give us the optimal solution for minimization problem (Equation~\eqref{equation:optimazation}), i.e., $\bm{a}_1 = \bm{a}_2$. Furthermore, Equations~\eqref{equation:re_contact_multiple_1} to~\eqref{equation:re_contact_multiple_4} and non-penetration constraint~\eqref{equation:re_contact_multiple_5} together will give us $\bm{a}_1$ and $\bm{a}_2$ as the touching solution, i.e.,:
\begin{enumerate}
\item $\bm{a}_1$ and $\bm{a}_2$ lie on the boundary of body A and B respectively.
\item $\mathcal{C}(\bm{A},\bm{a}_1) \cap -\mathcal{C}(\bm{B},\bm{a}_2) \neq \emptyset$ . 
\end{enumerate}
For (1), let us prove it by contradiction. If $\bm{a}_1$ lies within the interior of the body A, from Equation~\eqref{equation:re_contact_multiple_3}, $f_{c_i}(\bm{a}_1) < 0,\  l_{c_i} = 0 \ \forall c_i = 1,...,m$. From Equation~\eqref{equation:re_contact_multiple_1}, $\bm{a}_1 = \bm{a}_2$, thus $f_{c_i}(\bm{a}_2) < 0 \ \forall c_i = 1,...,m$, which contradicts to Equation~\eqref{equation:re_contact_multiple_5}. Thus $\bm{a}_1$ has to lie on the boundary of body A. If $\bm{a}_2$ lies within the body B, from Equation~\eqref{equation:re_contact_multiple_4}, $g_{c_j}(\bm{a}_2) < 0$, $l_{c_j} = 0  \ \forall c_j = m+1,...,n$. Thus, $\sum_{c_j = m+1}^n l_{c_j} \nabla g_{c_j} (\bm{a}_2) = 0$. Because the left hand side of Equation~\eqref{equation:re_contact_multiple_2} is nonzero, which also leads to a contradiction. Thus $\bm{a}_2$ lies on the boundary of body B. 

%In addition, in equation~\ref{equation:re_contact_multiple_2}, the right hand side of the equation represents the normal cone $-\mathcal{C}(\bm{B},\bm{a}_2)$. Therefore $\bm{a}_1 -\bm{a}_2 $ should also lie within cone $\mathcal{C}(\bm{B},\bm{a}_2)$. Because $\bm{a}_1$ and $\bm{a}_2$ lie on the boundaries of the bodies A and B respectively, 

%as stated in previous step, equations $\ref{equation:re_contact_multiple_1} \sim \ref{equation:re_contact_multiple_5} $ will give us the solution for $\bm{a}_1$ and $\bm{a}_2$ as the closest points on the boundary of bodies. Thus, we can view $\|\bm{a}_1 - \bm{a}_2\|$ as the closest distance between two bodies. Because $\bm{a}_1 \neq \bm{a}_2$, so $\|\bm{a}_1 - \bm{a}_2\| > 0 $, which contradicts to the fact that the distance between bodies is zero. Thus $\bm{a}_1 = \bm{a}_2$.

Now we need to prove (2). Since $\bm{a}_1$ lies on the boundary of A, there exists a normal cone $\mathcal{C}(\bm{A},\bm{a}_1) \neq \emptyset$. For $\bm{a}_2$, there exists a normal cone $\mathcal{C}(\bm{B},\bm{a}_2)  \neq \emptyset$. The left hand side of Equation~\eqref{equation:re_contact_multiple_2} represents the normal cone $\mathcal{C}(\bm{A},\bm{a}_1) $ and right hand side of this equation represents the normal cone $-\mathcal{C}(\bm{B},\bm{a}_2) $. This implies that  $\mathcal{C}(\bm{A},\bm{a}_1) \cap -\mathcal{C}(\bm{B},\bm{a}_2) \neq \emptyset$. From the separating hyperplane theorem (see Appendix B), we can conclude that there is a supporting hyperplane that contains the contact patch and is also a separating hyperplane for the two objects.

Thus, when distance between bodies is zero, solutions $\bm{a}_1$ and $\bm{a}_2$ that satisfy Equations~\eqref{equation:re_contact_multiple_1} $\sim$~\eqref{equation:re_contact_multiple_5} also ensure that the bodies will be touching each other and not intersecting (although we enforced the contact constraints only at the ECPs). This proves our proposition.

%Now we need to prove (3). Since $\bm{a}_1$ lies on the boundary of $F$, there exists a convex cone $\mathcal{C}(\bm{F},\bm{a}_1)$ that defines the supporting plane for point $\bm{a}_1$, on the contact patch of object $F$. 
%Let $II_F$ be the active index set. 
%For all points $\bm{b}_1$ on object $F$, that is also on the contact patch, with $\bm{b}_1 \neq \bm{a}_1$, a supporting plane can be defined by $\mathcal{C}(\bm{F},\bm{a}_1)$.  Also, since, $\bm{a}_2$ lies on the boundary of $G$, there exists a convex cone $\mathcal{C}(\bm{G},\bm{a}_2) $ that defines the supporting plane of point $\bm{a}_2$ on the contact patch in $G$ and furthermore, this set also defines a support plane for all points on the contact patch. 
%the active index set is $II_G$. 
%From the separating hyperplane theorem, when $\mathcal{C}(\bm{F},\bm{a}_1) \cap \mathcal{C}(\bm{G},\bm{a}_2) \neq \emptyset$, objects with line or surface contact can be separated by a common supporting hyperplane. So we need to prove $\mathcal{C}(\bm{F},\bm{a}_1) \cap \mathcal{C}(\bm{G},\bm{a}_2) \neq \emptyset$. The left hand side of equation~\ref{equation:re_contact_multiple_2} represents the convex cone $\mathcal{C}(\bm{F},\bm{a}_1) $ and right hand side of this equation represents the convex cone $\mathcal{C}(\bm{G},\bm{a}_2) $. So  $\mathcal{C}(\bm{F},\bm{a}_1) \cap \mathcal{C}(\bm{G},\bm{a}_2) \neq \emptyset$. Thus, solutions $\bm{a}_1$ and $\bm{a}_2$ that satisfy Equations $\ref{equation:re_contact_multiple_1} \sim \ref{equation:re_contact_multiple_5} $ also ensure that the objects do not penetrate.
\end{proof}
%{\bf Contact constraints for union of contact patches}:
As stated previously, objects $F$ and $G$ are formed by union of convex bodies. Thus, there exists multiple pairs of bodies or contact patches that have potential contact. In this subsection, we use modified KKT conditions (Equations~\eqref{equation:re_contact_multiple_1} $\sim$~\eqref{equation:re_contact_multiple_5}) to model each contact patch between objects separately. By ensuring that for each (potential) contact patch, Equations~\eqref{equation:re_contact_multiple_1} $\sim$~\eqref{equation:re_contact_multiple_5} is satisfied implies that the two objects do not penetrate with each other.
%Note that when objects are separate, contact constraints will return each pair of ECPs as the closest points on the boundaries of its associated bodies. When the distance between objects is zero, some pairs of bodies have zero distance between each other with only touching solutions, while others keep separate. In other words, system of constraints ensure non-penetration between objects with union of contact patches.
%\begin{figure}
%\centering
%\includegraphics[width=4in]{summary_Geometric}
%\caption{One step process for Geometrically implicit scheme}
%\label{figure:summary_geometric} 
%\end{figure} 

\noindent
{\bf Summary of geometrically implicit time-stepping scheme}:
To summarize, our geometrically implicit time-stepper has four components: discretized Newton-Euler equations (Equation~\eqref{equ:discrete_NE}), kinematic map (Equation~\eqref{equ:kinematic}), friction models (Equation~\eqref{equation:friction} and~\eqref{equation:friction_c}) and contact constraints (Equations~\eqref{equation:re_contact_multiple_1} $\sim$~\eqref{equation:re_contact_multiple_5}) for patches which are in contact or may have potential contact. Thus, the system of  equations for each time-step is a mixed non-linear complementarity problem (MLCP) which is composed of equality constraints (Equations~\eqref{equ:discrete_NE},~\eqref{equation:friction},~\eqref{equation:re_contact_multiple_1} and~\eqref{equation:re_contact_multiple_2}) and complementarity constraints (Equations~\eqref{equation:friction_c},~\eqref{equation:re_contact_multiple_1} $\sim$~\eqref{equation:re_contact_multiple_5}). Thus, the equivalent contact points, 
%$\{\bm{a}^{u+1}_{{(1,2)_i}}\}_{i=1}^{n_c}$ 
associated contact impulses, and the configuration of the object  
%$\{\bm{p}^{u+1}_{{(.)_i}}\}_{i=1}^{n_c}$ 
are solved simultaneously. 

%%%%%%%%%%%%%%%%%%%%%%%%%%%%%%%%%%%%%%%%%%%%%%%%%%%%%%%%%%%%%%%%%%%%%%
\section*{PLANAR SLIDING WITH PURE TRANSLATION}
In this section, we consider an object $F$ sliding with pure translation on the flat plane $G$, where the contact region contains multiple convex patches. We prove that in this setting, the state of the object at the end of the time step is uniquely determined although the ECPs and the contact impulses associated with the contact patches may not be unique. Without loss of generality, object $G$ is assumed to be fixed.
The axes of contact frame on each contact patch, $i$, are normal axis $\bm{n}_i\in R^3$, and tangential axes $\bm{t}_i   \in R^3 $, $\bm{o}_i \in R^3$. The pair of ECPs for the $i$th contact patch between $F$ and $G$ are $\bm{a}_{1_i}$ and $\bm{a}_{2_i}$. The vector from center of gravity of $F$ to ECP $\bm{a}_{2_i}$ is $\bm{r}_i = [a_{2x_i}- q_x, a_{2y_i} - q_y, a_{2z_i} - q_z]^T$ and wrenches are defined in Equation $\ref{equation:wrenches}$.
The state of the object F is $\bm{q} = [q_x, q_y, q_z,{^s\theta}_x, {^s\theta}_y, {^s\theta}_z]^T$. Let velocity $\bm{v} = [v_x , v_y , v_z]^T$ and spatial angular velocity be ${^s\bm{w}} = [{^sw}_x, {^sw}_y, {^sw}_z]^T$. Thus generalized velocity is $\bm{\nu} = [v_x , v_y , v_z, {^sw}_x, {^sw}_y, {^sw}_z]^T$.
The vector of external impulses and angular impulses are $
\bm{J}_{app} =  [J_x, J_y,-m\beta h+J_z]^T$ and
$\bm{L}_{app} = [L_{x\tau},L_{y\tau},L_{z\tau}]^T
$, where $\beta$ is the acceleration due to gravity. Thus, the generalized applied impulse is $\bm{P}_{app} =[J_x, J_y,-m\beta h+J_z,L_{x\tau},L_{y\tau},L_{z\tau}]^T $.

\noindent
{\bf Dynamic equations for pure translation}:
Since $G$ is a flat plane with zero curvature, we choose $\bm{n}$ as the normal axis of contact frame on each patch. Thus the  contact frame $(\bm{t}, \bm{o}, \bm{n})$ for each contact patch is same.
From equation~\ref{equ:discrete_NE}, the translational components of equations of motion can be written as:
\begin{equation}
 0=-m\bm{I}_{3}(\bm{v}^{u+1}-\bm{v}^{u}) +\bm{n}\sum_{i=1}^{n_c} p_{n_i}^{u+1}
 +\bm{t}\sum_{i=1}^{n_c} p_{t_i}^{u+1}+\bm{o}\sum_{i = 1}^{n_c} p_{o_i}^{u+1} +
\bm{J}^u_{app} 
\end{equation}

\noindent
Along the direction of contact frame ($\bm{n},\bm{t},\bm{o}$), 
%it can be modified as:
\begin{align}
\label{equation:Euler_1}
m\bm{n}\cdot\bm{v}^{u+1} = \sum_{i=1}^{n_c} p_{n_i}^{u+1} +\bm{n}\cdot \bm{J}^u_{app} +m\bm{n}\cdot\bm{v}^{u}\\
\label{equation:Euler_2}
m\bm{t}\cdot\bm{v}^{u+1} = \sum_{i=1}^{n_c} p_{t_i}^{u+1} +\bm{t}\cdot \bm{J}^u_{app} +m\bm{t}\cdot\bm{v}^{u}\\
\label{equation:Euler_3}
m\bm{o}\cdot\bm{v}^{u+1} = \sum_{i=1}^{n_c} p_{o_i}^{u+1} +\bm{o}\cdot \bm{J}^u_{app} +m\bm{o}\cdot\bm{v}^{u}
\end{align}
\comment{
From equation~\ref{equ:discrete_NE}, the rotational dynamic equation is written as:
\begin{multline}
\label{equation:Euler_456}
0 = -{^s\bm{I}}_{cm}^{u+1} (^s\bm{w}^{u+1}-^s\bm{w}^{u})-^s\bm{w}^{u} \times( {^s\bm{I}}_{cm}^{u}{^s\bm{w}^{u}}) 
- \bm{n} \times\sum_{i=1}^{n_c} p_{n_i}^{u+1} \bm{r}^{u+1}_i\\
-\bm{t} \times \sum_{i=1}^{n_c} p_{t_i}^{u+1} \bm{r}^{u+1}_i
-\bm{o} \times\sum_{i=1}^{n_c} p_{o_i}^{u+1} \bm{r}^{u+1}_i
+ \bm{n}\sum_{i=1}^{n_c} p_{r_i}^{u+1} +\bm{L}^u_{app}
\end{multline}
}

\noindent
{\bf Normal velocity constraint}:
As proven in previous section, for each contact patch which keeps in contact, the associated pair of ECPs coincide with each other ($\bm{a}_{1_i}=\bm{a}_{2_i}$). Furthermore, as $F$ never loses contact with $G$, velocity of $F$ along normal direction should be zero, i.e., $\bm{n}\cdot \bm{v} = 0$.
%\begin{equation}
%\label{equation:normal_velocity}
%\bm{n}\cdot \bm{v} = 0
%\end{equation}
%From the contact constraints, if the contact is maintained at the end of the time-step, the pair of ECP for each contact patch must be on the plane, so $a_{i_1z}^{u+1} = a_{i_2z}^{u+1} = 0$. Furthermore, $v_z^{u+1} = v_z^{u} = 0$, and the friction model for each contact patch $i$ is given by equation~\ref{equation:friction}.
\comment{
\begin{figure}
\centering
\includegraphics[width=4in]{figure_surface_contact}
\caption{The surface contact of the object A and the flat plane}
\label{figure:3D_surface} 
\end{figure} 
}

\noindent
{\bf Friction model for pure translation}:
For pure translation, angular velocity of the object is zero and velocity of any point of the object stays the same.  Consider the friction model (Equations~\eqref{equation:friction} and~\eqref{equation:friction_c} ) for contact patch $i$. Without loss of generality, we assume that $e_{t_i}, e_{o_i}, e_{r_i}$ and coefficient of friction $\mu_i$ is same for each patch. Thus, slip velocity $\sigma_i$ on each patch $i$ has same value, which is $\sigma_i = \sqrt{ (e_t\bm{t}\cdot\bm{v}^{u+1})^2 +(e_o\bm{o}\cdot \bm{v}^{u+1})^2}$ . Therefore, adding the friction constraints for each contact, we obtain
\begin{align}
\label{equation:friction_modified_1}
0&=
e^{2}_{t}\mu \bm{t}\cdot\bm{v}^{u+1}\sum_{i=1}^{n_c}p_{n_i}^{u+1} +
\sigma\sum_{i=1}^{n_c}p_{t_i}^{u+1}\\
\label{equation:friction_modified_2}
0&=
e^{2}_{o}\mu \bm{o}\cdot\bm{v}^{u+1}\sum_{i=1}^{n_c}p_{n_i}^{u+1} +
\sigma\sum_{i=1}^{n_c}p_{o_i}^{u+1}\\
\label{equation:friction_modified_3}
0&= \sigma\sum_{i=1}^{n_c}p_{r_i}^{u+1}\\
\label{equation:friction_modified_4}
\sigma &= \sqrt{ (e_t\bm{t}\cdot\bm{v}^{u+1})^2 +(e_o\bm{o}\cdot \bm{v}^{u+1})^2}
\end{align}
{\bf Analytical solution for sum of contact impulses and linear velocity of the object}:
Assuming isotropic friction ($e_t = e_o$), we now combine translational dynamic Equations~\eqref{equation:Euler_1}$\sim$~\eqref{equation:Euler_3}, normal velocity constraint ($\bm{n}\cdot \bm{v} = 0$), and friction model (Equation~\eqref{equation:friction_modified_1}$\sim$~\eqref{equation:friction_modified_4}) to derive the closed form solution for the sum of contact impulses and linear velocity of the object. 
%We derive the closed form equations for the sum of contact impulses and velocity of object F. There is no unique solution for ECPs on each contact patch and their associated contact impulses, but there exists closed form formulas for the sum of contact impulses.

\begin{proposition}{Equations~\eqref{equation:Euler_1}$\sim$~\eqref{equation:friction_modified_4} together model the motion of planar sliding with pure translation. Furthermore, by assuming isotropic friction  ($e_t = e_o$), there exists analytical solutions for the sum of contact impulses ($\sum_{i=1}^{n_c} p_{t_i}^{u+1}$, $\sum_{i=1}^{n_c} p_{o_i}^{u+1}$ and $\sum_{i=1}^N p_{r_i}^{u+1}$) and linear velocity of the object ($\bm{v}^{u+1}$):}
\begin{align}
\label{equation:impulse_t}
\sum_{i=1}^{n_c} p_{t_i}^{u+1} &= -\frac{e_t\mu\bm{n}\cdot\bm{J}^u_{app}(\bm{t}\cdot \bm{J}^u_{app} +m \bm{t}\cdot \bm{v}^u) }{\sqrt{(m \bm{t}\cdot \bm{v}^u +\bm{t}\cdot \bm{J}^u_{app})^2+(m \bm{o}\cdot \bm{v}^u +\bm{o}\cdot \bm{J}^u_{app})^2}} \\
\label{equation:impulse_o}
\sum_{i=1}^{n_c} p_{o_i}^{u+1} &= -\frac{e_o\mu\bm{n}\cdot\bm{J}^u_{app} (\bm{o}\cdot \bm{J}^u_{app} +m \bm{o}\cdot \bm{v}^u)}{\sqrt{(m \bm{t}\cdot \bm{v}^u +\bm{t}\cdot \bm{J}^u_{app})^2+(m \bm{o}\cdot \bm{v}^u +\bm{o}\cdot \bm{J}^u_{app})^2}}\\
\label{equation:impulse_r}
\sum_{i=1}^N p_{r_i}^{u+1}&= 0 \\
\label{equation:impulse_n}
\sum_{i=1}^{n_c} p_{n_i}^{u+1}&= \bm{n}\cdot\bm{J}^u_{app}
\end{align}
where Equations~\eqref{equation:impulse_t}$\sim$\eqref{equation:impulse_n} are the analytical solutions for sum of impulses. The analytical solution of the linear velocity can be derived by substituting Equations~\eqref{equation:impulse_t},~\eqref{equation:impulse_o} and~\eqref{equation:impulse_n} into Equations~\eqref{equation:Euler_1},~\eqref{equation:Euler_2} and~\eqref{equation:Euler_3}.  
%There is no closed form equation for the ECPs and associated contact impulses on each contact patch $i$. As states in previous section, any point on the contact patch satisfies the contact constraints. However, the solution of ECPs and contact impulses should be consistent with rotational dynamic equations (equation~\ref{equation:Euler_456}) so that object's angular velocity keeps zero during the sliding.
\end{proposition}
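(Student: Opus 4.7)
The plan is to decouple the equations by exploiting two features of the pure-translation, flat-surface setting: the contact frame $(\bm{n},\bm{t},\bm{o})$ is common to every patch, and the slip speed $\sigma$ is the same at every patch because under pure translation every ECP has the same velocity. Consequently, summing the per-patch friction laws~\eqref{equation:friction_modified_1}--\eqref{equation:friction_modified_4} produces relations that depend only on the aggregate impulses $\sum_i p_{n_i}^{u+1}$, $\sum_i p_{t_i}^{u+1}$, $\sum_i p_{o_i}^{u+1}$, and $\sum_i p_{r_i}^{u+1}$, which is precisely the information the proposition asks for; the individual per-patch impulses remain under-determined, but their sums will not.

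I would then resolve the four aggregate quantities in turn. Since contact persists, $\bm{n}\cdot\bm{v}^{u+1}=\bm{n}\cdot\bm{v}^u=0$, so~\eqref{equation:Euler_1} collapses to an algebraic identity giving $\sum_i p_{n_i}^{u+1}$ in closed form, yielding~\eqref{equation:impulse_n}. For the rotational component, summing~\eqref{equation:friction_modified_3} over $i$ gives $\sigma\sum_i p_{r_i}^{u+1}=0$, so provided sliding is occurring ($\sigma>0$) we recover~\eqref{equation:impulse_r}. The tangential system is the heart of the argument. Abbreviate $A=\bm{t}\cdot\bm{J}^u_{app}+m\,\bm{t}\cdot\bm{v}^u$, $B=\bm{o}\cdot\bm{J}^u_{app}+m\,\bm{o}\cdot\bm{v}^u$, $v_t=\bm{t}\cdot\bm{v}^{u+1}$, $v_o=\bm{o}\cdot\bm{v}^{u+1}$, and $S_n=\sum_i p_{n_i}^{u+1}$. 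Substituting the summed friction relations~\eqref{equation:friction_modified_1}--\eqref{equation:friction_modified_2} into~\eqref{equation:Euler_2}--\eqref{equation:Euler_3} produces the symmetric pair $(m+e\mu S_n/\sigma)\,v_t=A$ and $(m+e\mu S_n/\sigma)\,v_o=B$, with $\sigma=e\sqrt{v_t^2+v_o^2}$ thanks to isotropy. Passing to polar coordinates $(v_t,v_o)=\rho(\cos\phi,\sin\phi)$ decouples magnitude from direction: the direction satisfies $(\cos\phi,\sin\phi)=(A,B)/\sqrt{A^2+B^2}$, and the magnitude is fixed by the single linear equation $m\rho+e\mu S_n=\sqrt{A^2+B^2}$. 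Back-substituting into the summed tangential friction laws recovers~\eqref{equation:impulse_t} and~\eqref{equation:impulse_o}, and into~\eqref{equation:Euler_2}--\eqref{equation:Euler_3} gives the closed form of $\bm{v}^{u+1}$.

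The main obstacle will be the degenerate sticking case $\sigma=0$: there~\eqref{equation:friction_modified_3} no longer pins $\sum_i p_{r_i}^{u+1}$, the polar decomposition is ill-defined, and the friction ellipsoid constraint~\eqref{equation:friction_c} becomes slack rather than active, so the closed-form expressions must be read as a limit of the sliding regime. A clean way to isolate the sliding case is the inequality $\sqrt{A^2+B^2}>e\mu S_n$, which is exactly the condition for the magnitude equation to produce a positive $\rho$. Apart from this edge case, and the essential use of the isotropy hypothesis $e_t=e_o$ so that $\sigma$ factors cleanly out of the per-patch summation, the derivation is algebraically routine and the uniqueness of the aggregates is baked in once the pair-wise equations above are shown to admit a unique $(\rho,\phi)$.
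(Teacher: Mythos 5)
Your proposal is correct and follows essentially the same route as the paper's proof: use $\bm{n}\cdot\bm{v}^{u+1}=0$ in~\eqref{equation:Euler_1} to get~\eqref{equation:impulse_n}, substitute~\eqref{equation:Euler_2}--\eqref{equation:Euler_3} into the summed friction laws~\eqref{equation:friction_modified_1}--\eqref{equation:friction_modified_2} and simplify to get~\eqref{equation:impulse_t}--\eqref{equation:impulse_o}, and invoke $\sigma\neq 0$ in~\eqref{equation:friction_modified_3} to get~\eqref{equation:impulse_r}. Your polar-coordinate resolution of the tangential pair simply makes explicit the ``after simplification'' step the paper omits (modulo a harmless slip where $e\mu S_n/\sigma$ should read $e^2\mu S_n/\sigma$, which your magnitude equation $m\rho+e\mu S_n=\sqrt{A^2+B^2}$ already corrects), and your explicit treatment of the sticking case $\sigma=0$ is a point the paper silently assumes away.
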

\begin{proof}
Substituting $\bm{n}\cdot \bm{v} = 0$ into Equation~\eqref{equation:Euler_1}, we prove that $\sum_{i=1}^{n_c} p_{n_i}^{u+1} = \bm{n}\cdot\bm{J}^u_{app}$. Them we substitute Equation~\eqref{equation:Euler_2} and~\eqref{equation:Euler_3} into Equations~\eqref{equation:friction_modified_1} and~\eqref{equation:friction_modified_2}. After simplification, we get the closed form expression for $\sum_{i=1}^{n_c} p_{t_i}^{u+1}$ and 
$\sum_{i=1}^{n_c} p_{o_i}^{u+1}$. From Equation~\eqref{equation:friction_modified_3}, because $\sigma \neq 0$, thus $\sum_{i=1}^{n_c} p_{r_i}^{u+1}= 0$.
\end{proof}

\comment{
\subsection*{Non-uniqueness for ECPs and associated contact impulses on each contact patch }
In pure translation case, angular velocity of the object is zero. We decompose equation~\ref{equation:Euler_456} along axes of contact frame ($\bm{n},\bm{t},\bm{o}$):
\begin{align}
\label{rotat_1}
0 = \bm{t}\cdot \sum_{i=1}^{n_c} p_{o_i}^{u+1} \bm{r}^{u+1}_i - \bm{o}\cdot\sum_{i=1}^{n_c} p_{t_i}^{u+1} \bm{r}^{u+1}_i + \bm{n}\cdot\bm{L}^u_{app} \\
\label{rotat_2}
0 = \bm{o}\cdot \sum_{i=1}^{n_c} p_{n_i}^{u+1} \bm{r}^{u+1}_i - \bm{n}\cdot\sum_{i=1}^{n_c} p_{o_i}^{u+1} \bm{r}^{u+1}_i + \bm{t}\cdot\bm{L}^u_{app} \\
\label{rotat_3}
0 = \bm{n}\cdot \sum_{i=1}^{n_c} p_{t_i}^{u+1} \bm{r}^{u+1}_i - \bm{t}\cdot\sum_{i=1}^{n_c} p_{n_i}^{u+1} \bm{r}^{u+1}_i + \bm{o}\cdot\bm{L}^u_{app}
\end{align}
Combining equations~\ref{rotat_1}$\sim$~\ref{rotat_3} with equations~\ref{equation:impulse_t}$\sim$~\ref{equation:impulse_n}, we achieve a under-determined system of equations (There are totally 7 equations and unknown variables include $a_{x_i},a_{y_i},a_{z_i}, p_{t_i}, p_{o_i}, p_{r_i}, p_{n_i}, \ \forall i = 1,...,n_c $. Thus, when $n_c = 1$, there exists only one contact patch, and solutions for unknown variables become unique). In addition, any point satisfies contact constraints (i.e., point lie on the contact patch) can be a possible solution for ECP. Except point contact case, there exists infinite number of contact points on the contact patch. 

To sum up, there is no unique solution for each pair of ECPs and contact impulses when number of contact patches is more than or equal to two. (In point contact case, ECPs is unique, but contact impulses are still non-unique.)

}
%In this section, we focus on a special case of dynamic problem with multiple convex contact patches: planar sliding with pure translation. We present analytical solution for the sum of contact impulses and linear velocity of the object. 
The analytical solution presented in this section solves for the state of the object directly, while our general method requires to solve the geometrically implicit model numerically. Apart from being useful in the special case of pure translation, as we will show in the next section, the analytical solution is useful to validate our numerical results. 
\comment{
Because object G is a flat plane, so all the ECPs $\bm{a}_{2_i}$ (ECPs belong to object G) should lie on the plane. We have additional constraints for $\bm{a}_{2_i}$:
\begin{equation}
\bm{n} \cdot \bm{a}_{2_i} = d, \ \forall i = 1,...,n_c
\end{equation}
where $\bm{n}$ is the normal vector of the plane and $d$ is the distance of the plane from the origin of the world frame. 
}

%\begin{remark}
%In pure translation case, the problem with multiple contact patches can be converted into the problem with one contact patch except that there does not exist unique solution for ECP and contact impulses on each contact patch.
%\end{remark}

\begin{figure*}%
\centering
\begin{subfigure}{0.7\columnwidth}
\includegraphics[width=\columnwidth]{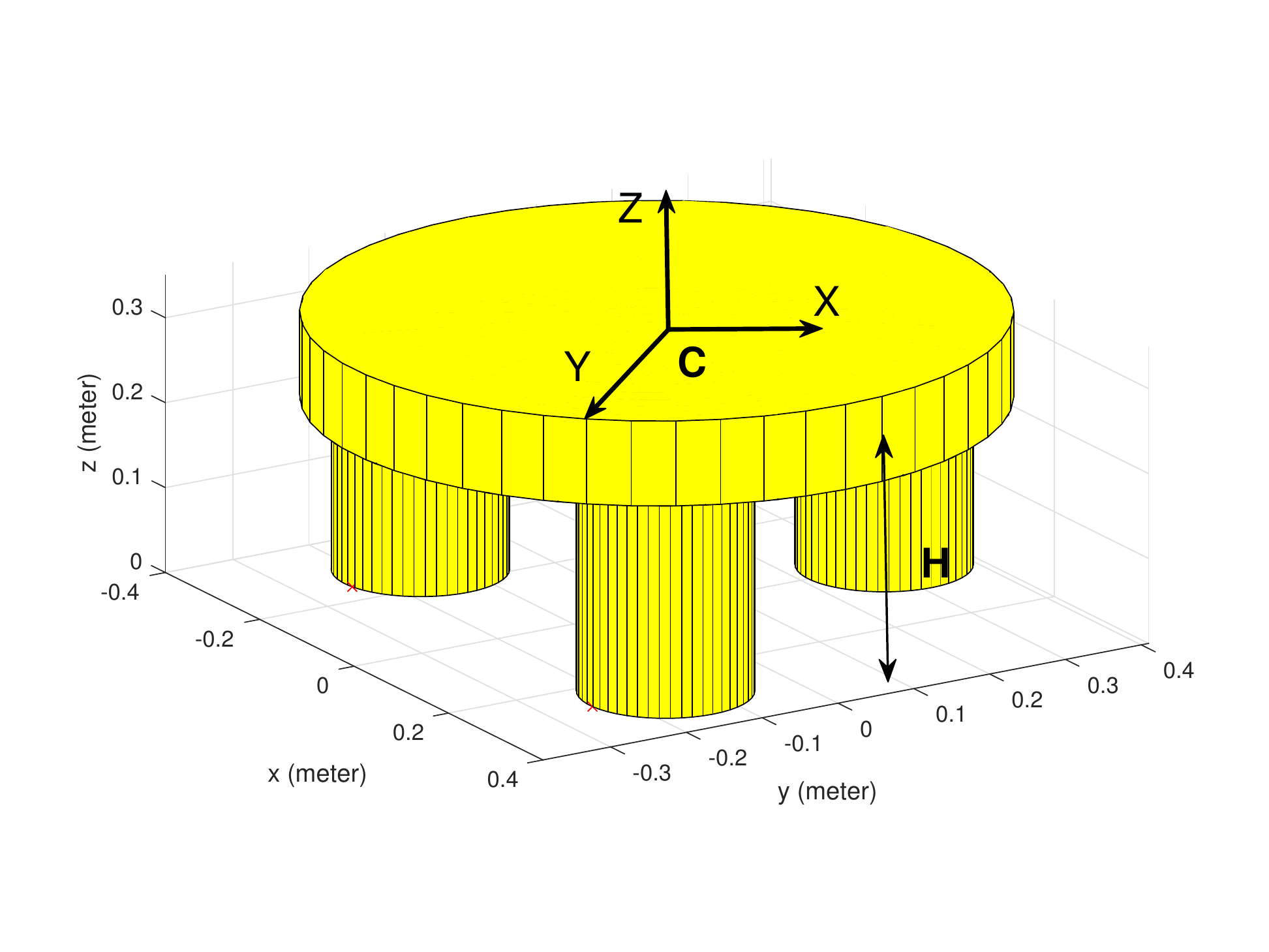}%
\caption{Object with three contact patches on the plane. }
\label{figure:ex1_picture} 
\end{subfigure}\hfill%
\begin{subfigure}{0.65\columnwidth}
\includegraphics[width=\columnwidth]{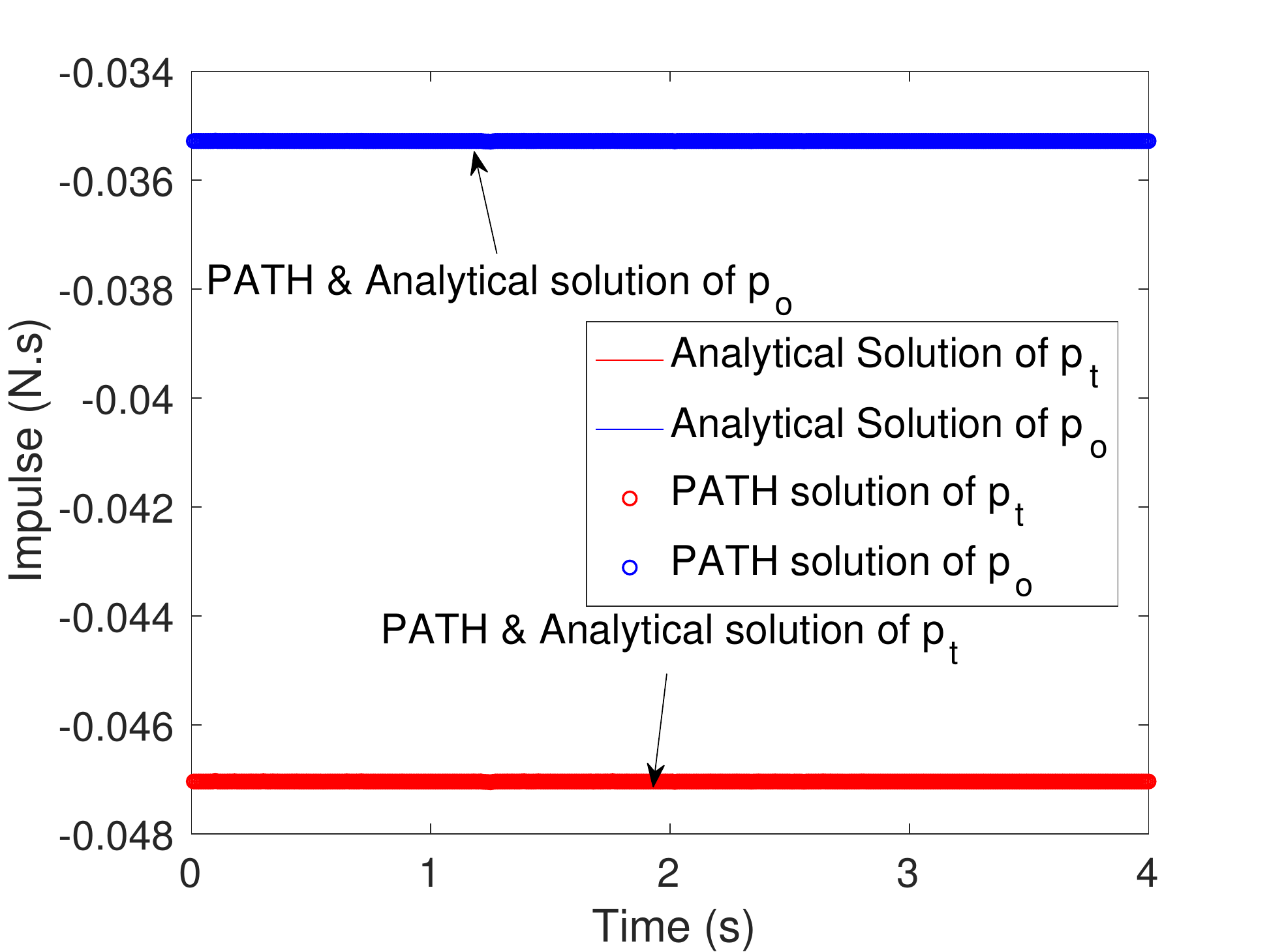}%
\caption{Analytically computed and numerically computed  contact impulses match (within numerical tolerance of $10^{-6}$). }
\label{figure:ex1_impulse} 
\end{subfigure}\hfill%
\begin{subfigure}{0.65\columnwidth}
\includegraphics[width=\columnwidth]{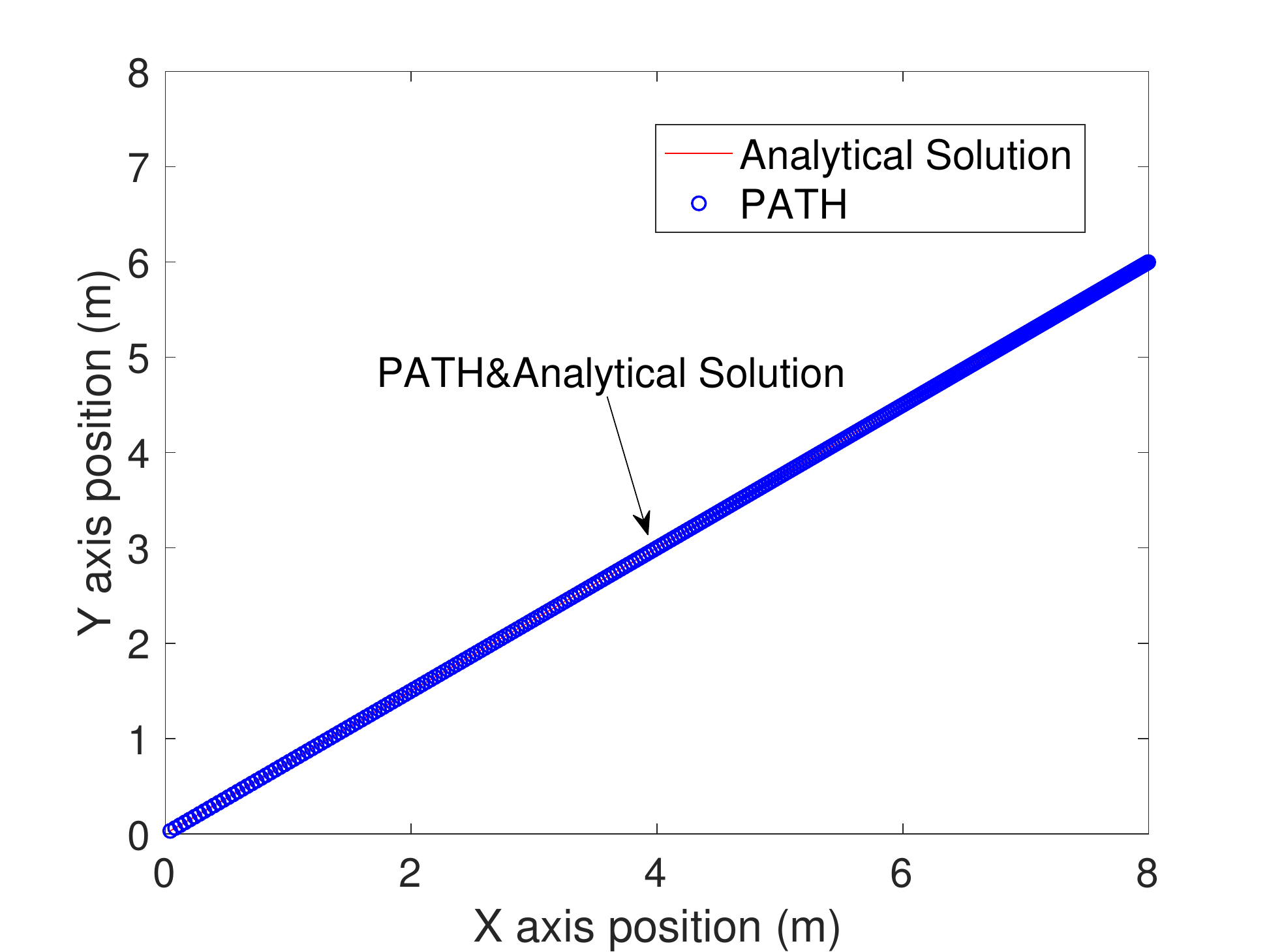}%
\caption{Analytically computed and numerically computed  trajectories match (within numerical tolerance of $10^{-6}$).}
\label{figure:ex1_delta} 
\end{subfigure}%
\caption{For object with three contact patches undergoing pure translation our numerical solution matches the analytical solution.}
\label{Example1}
\end{figure*}
\begin{figure*}%
\centering
\begin{subfigure}{0.65\columnwidth}
\includegraphics[width=\columnwidth]{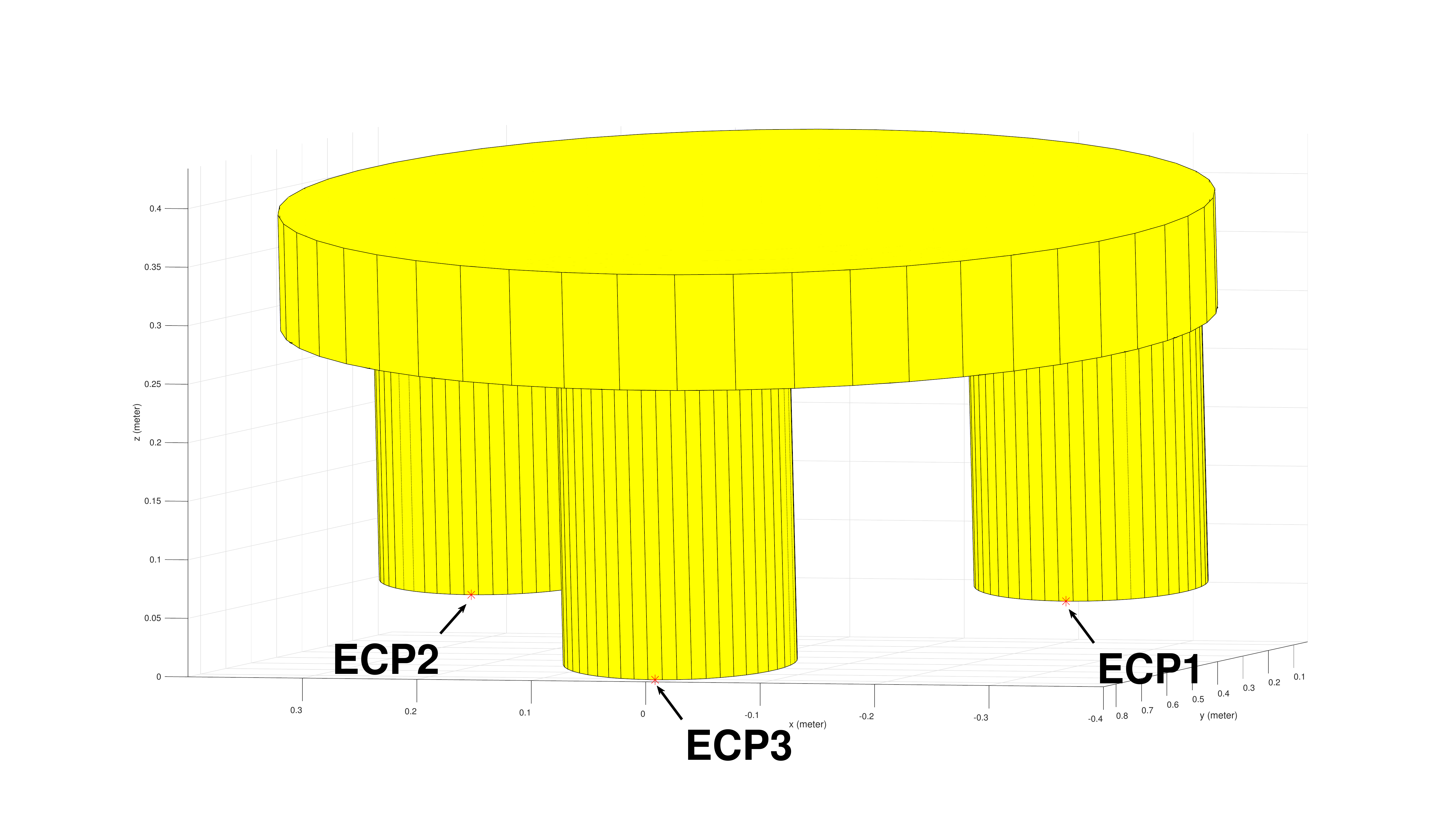}%
\caption{Snapshot of the object toppling on the plane. The red dots mark the three contact points. }
\label{figure:ex2_picture} 
\end{subfigure}\hfill%
\begin{subfigure}{0.65\columnwidth}
\includegraphics[width=\columnwidth]{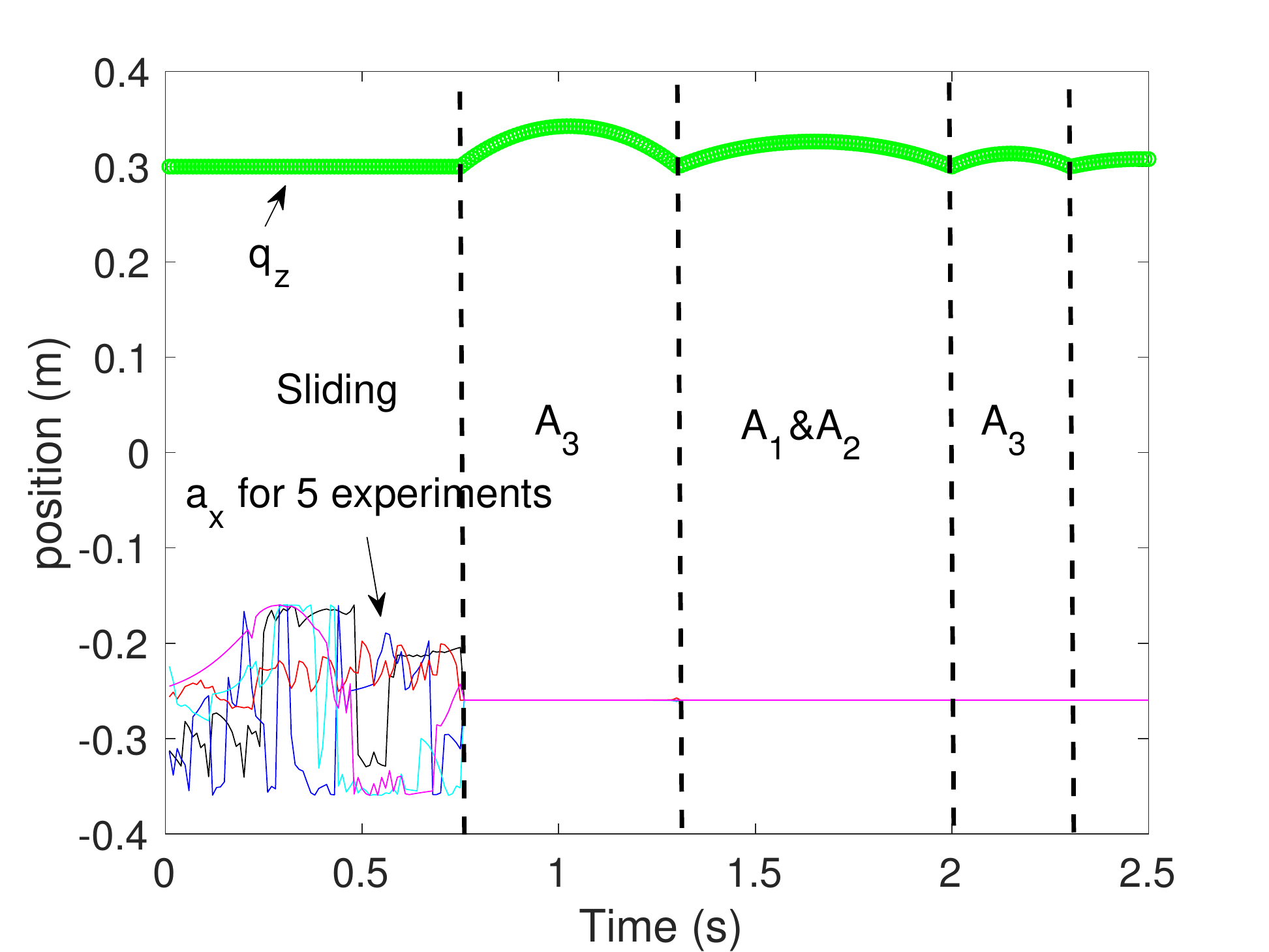}%
\caption{$X$-coordinate of contact point $1$ for $5$ different simulations.}
\label{figure:ex2_ax} 
\end{subfigure}%
\begin{subfigure}{0.65\columnwidth}
\includegraphics[width=\columnwidth]{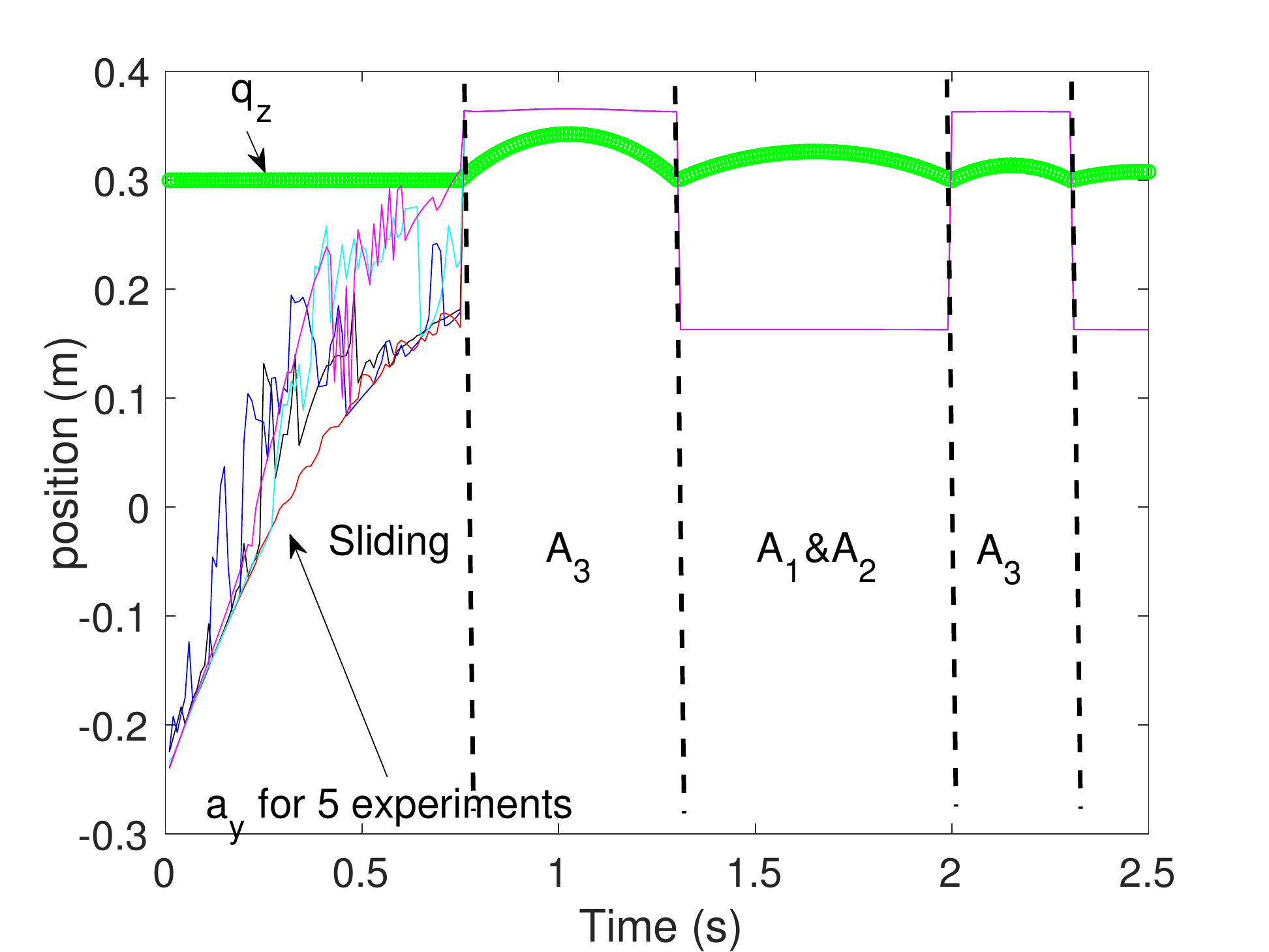}%
\caption{$Y$-Coordinate of contact point $1$ for $5$ different simulations.}
\label{figure:ex2_ay} 
\end{subfigure}%
\caption{Simulation of a three-legged object on a plane, where the motion transitions between sliding and toppling. Our simulation captures the contact transitions between patch (or surface) contacts during sliding to point contacts during toppling. Although the (equivalent) contact point may be non-unique during sliding, the position of the body is unique, and when toppling starts, i.e., contact transitions to point contact, the contact point is unique.}
\label{Example2}
\end{figure*}
\comment{
\begin{figure*}%
\centering
\begin{subfigure}{0.45\columnwidth}
\includegraphics[width=\columnwidth]{figure_2_patch}%
\caption{Object with two contact patches on the plane. }
\label{figure:ex3_picture} 
\end{subfigure}\hfill%
\begin{subfigure}{0.65\columnwidth}
\includegraphics[width=\columnwidth]{figure_2_patch_position}%
\caption{Trajectories of position of object from repeated experiments .}
\label{figure:ex3_position} 
\end{subfigure}%
\begin{subfigure}{0.65\columnwidth}
\includegraphics[width=\columnwidth]{figure_2_patch_po1}%
\caption{Contact impulse $p_o$ of ECP in contact patch 1 from repeated experiments .}
\label{figure:ex3_po1} 
\end{subfigure}%
\caption{Object with two contact patches sliding and rotating about normal axis on the plane.}
\label{Example3}
\end{figure*}
}
%%%%%%%%%%%%%%%%%%%%%%%%%%%%%%%%%%%%%%%%%%%%%%%%%%%%%%%%%%%%%%%%%%%%%%
\section*{NUMERICAL RESULTS}
We have tested our methodology for objects moving on the plane where the contact between the object and the plane can be modeled as a union of convex patches. We now present representative numerical simulations to illustrate key aspects of our methodology. For both examples, we use the complementarity solver, PATH~\cite{StevenP.Dirkse1995}, to solve the NCP at each time step. In our first simulation, we consider a table with three legs translating on a flat plane. For this situation, we use the analytical solution as our ground truth (Equations~\eqref{equation:impulse_t} $\sim$~\eqref{equation:impulse_n}) and compare it with the numerical results obtained from solving our NCP problem model (that was formulated without making the special assumption that the motion is pure translation). This example serves as a sanity check to validate that our method gives the solution that should be obtained. In our second simulation, we consider an object sliding on the plane along $y$ axis, on which, an angular impulse was applied about the $x$ axis. Through this example, we illustrate that although there does not exists unique solution for ECPs and contact impulses on each contact patch, the state of the object is still unique. To show this, we repeat this experiment five times by changing the initial guess for the PATH solver. We also show that our method allows seamless transition between different contact modes (e.g., surface contact to point contact).

\noindent
{\bf Example 1: Translating object on a plane}:
%In this example, we simulate an object with three legs sliding without rotation on a horizontal flat plane. This example is used to validate our method by comparing our numerical result with the analytical solutions (equations$~\ref{equation:impulse_t} \sim \ref{equation:impulse_n}$). 
Figure $\ref{figure:ex1_picture}$ shows an object with three legs translating on a plane with no rotation. The contact area between the object and the plane is an union of three disk contact patches. The body frame is fixed at the center of gravity $C$ of the object. The height of center of gravity $H = 0.3m$. Each leg is a cylinder with radius $0.1 m$. The distance from $C$ to the axis of each cylinder is $0.3 m$. %During the simulation, object keeps sliding with pure translation until it stops on the plane. 
We use a fixed time step, $h = 0.01 s$. 
The mass of the object is $m = 5 kg$. Let acceleration due to gravity be $\beta = 9.8 m/s^2 $, constants of friction ellipsoid be $e_t = 1, e_o = 1, e_r=1$ and coefficient of friction be $\mu = 0.12$. The initial configuration $q = [0, 0, 0.3, 0, 0, 0, 0]^T$ and initial generalized velocity $\bm{\nu} = [4, 3, 0, 0, 0, 0]^T$. 

Figure~\ref{figure:ex1_impulse} show the numerical and analytical solutions for the sum of contact impulses in tangential directions $\sum_{c_i}^{n_c}p_{t_i}$ (the red line represents analytical solution, and the red circle marker represents numerical solution) and $\sum_{c_i}^{n_c}p_{o_i}$ ( the blue line represents analytical solution, and the blue circle marker represents numerical solution). Here we use analytical solution as a ground truth. In Figure~\ref{figure:ex1_impulse}, because our numerical solution matches analytical solution within tolerance of $10^{-6}$, the red and blue markers overlap the red and blue lines.
Figure $\ref{figure:ex1_delta}$ shows the numerical and analytical solution for the trajectory of object along $x$ and $y$ axis (blue marker for numerical solution and red line for analytical solution). Our numerical solution matches the analytical solution (therefore, we do not see two separate lines on the plot). 

\noindent
{\bf Example 2: Translating and rotating object on a  plane}:
We consider a table (with same geometry and inertial parameters as in Example $1$) with three legs undergoing general three-dimensional rotational and translational motion on a plane object. The table is initially sliding along the $y$ axis. At $t \approx 0.75$ seconds (see Figure~\ref{figure:ex2_ax} and~\ref{figure:ex2_ay}), we provide a angular impulse $L_{x\tau} = -5 N.m$ on the object. The value of the impulse is chosen so that the table tilts about the $x$-axis  but does not topple over. %We repeat this simulation for 5 times. 

This example is used to illustrate that our method can allow objects to automatically transition between different contact modes (surface, point, and line and also making and breaking of contact). In addition, it empirically validates that the non-uniqueness of ECPs and associated contact impulses on each contact patch do not affect the uniqueness of state of the object. Furthermore, when there is a transition from patch contact to point contact, where the ECP is unique, our method obtains the unique ECP, irrespective of the non-unique ECP that was obtained when there was surface contact.  

The initial velocity and configuration of the object are $\bm{\nu} = [0, 1, 0, 0, 0, 0]^T$ and $\bm{q} = [0, 0, 0.3, 1, 0, 0, 0]^T$. We repeat the simulation for $5$ times based on different initial values for solving the NCP for the first time step. As Figure $\ref{figure:ex2_picture}$ shows, after the application of the impulse, the object tilts on the plane. Only one leg has point contact with the plane (denoted by ECP3) while the other two legs just lose contact with the plane. 
Figures~\ref{figure:ex2_ax} and~\ref{figure:ex2_ay} shows the trajectory of ECP1 during the motion. $A_1,A_2, A_3$ represents the index of contact patch where ECP1, ECP2, and ECP3 belongs, respectively. In Figures~\ref{figure:ex2_ax} and~\ref{figure:ex2_ay}, the region from the start to the first vertical dashed line, labeled 'Sliding' is the region where all the three patches are in contact and the object is sliding. The region labeled $A_3$ within two vertical dashed lines indicates that the object is tilted and there is only point contact between $A3$ and the plane (there is no contact between the other legs and the plane). In the region labeled $A_1 \& A_2$, there is point contact between each of $A_1$ and $A_2$ and the plane. Note that during sliding, the trajectory of ECP1 is not unique and depends on the initial guess for solving the MNCP for the first time step (the $5$ lines of different colors correspond to $5$ different runs). However, after $t \approx 0.75$ seconds, when the object starts rotating about the $x$ axis and there is transition from surface contact on all three patches to single point contact with $A3$. Thus ECP1 changes to the closest point on the patch $1$ from the plane. As shown in Figures~\ref{figure:ex2_ax} and~\ref{figure:ex2_ay}, $X$ coordinate of ECP1 stays constant (although the point on the surface that is ECP1 changes as the body tilts), since the rotation is about $x$ axis, but there is a  jump in $Y$ coordinate. When patch $1$ and $2$ have point contact with the plane, ECP1 changes to the contact point. For point contact, the ECPs are unique and we see that from our simulation that irrespective of the starting condition, when the transition to point contact occurs we get the same solution (the different colored lines coincide in the regions after the 'Sliding' region). 
The green line shows the $z$ coordinate of the center of the object. This remains same (as also the $x$ and $y$ coordinates, which are not shown due to lack of space) in all the experiments. This demonstrates that although the ECP may not be unique (hence, contact impulses may not be unique), the net contact impulse and the net motion is always unique (again, the plot for the contact impulses is not shown due to space constraints). Furthermore, the oscillation of $z$ coordinate shows that the object is tilting back and forth after the application of the impulsive moment about the $x$ axis.

\comment{
\subsection*{Example 3: The object with two contact patches sliding with rotating about the normal axis on the Plane}
Shown in figure~\ref{figure:ex3_picture},  we modified the object in previous example to be with two contact patches. The initial velocity $\bm{\nu} = [4, 3, 0, 0, 0, 1]$ and initial configuration $q = [0, 0, 0.3, 0, 0, 0, 0]$. 

Figure~\ref{figure:ex3_position} plots the trajectories of the position of object from five repeated experiments. The result illustrates the uniqueness of the state of the object.  Figure~\ref{figure:ex3_po1} plots the contact impulse $p_o$ on the ECP of contact patch $1$, which shows the non-uniqueness of the contact impulses.
}
%%%%%%%%%%%%%%%%%%%%%%%%%%%%%%%%%%%%%%%%%%%%%%%%%%%%%%%%%%%%%%%%%%%%%%
\section*{CONCLUSION}
In this paper, we presented a geometrically implicit time-stepping method for solving dynamic simulation problems with multiple convex contact patches. We combine the collision detection with numerical integration, which allows us to solve for an equivalent contact point (ECP) on each contact patch as well as the contact wrenches simultaneously. We prove that although we model each contact patch with an ECP, the non-penetration constraints at the end of the time-step are always satisfied. Our numerical simulation results demonstrate that although the ECP and its associated contact wrenches on each contact patch may not be unique, the state (configuration and velocity) of the object is still unique. We present numerical results illustrating that our method can automatically transition among different contact modes (non-convex contact patch, point and line). For pure translation, we can solve for the state of the object in closed form as well as the constraints for ECPs and contact wrenches such that the surface contact will be maintained at the end of time step. 

In this paper, we have given empirical evidence that the proposed method generates a unique solution for the state of the object, although the contact impulses and ECP generated by the geometrically implicit method is non-unique. In future work, we want to obtain a theoretical proof of the claim. Furthermore, we want to exploit the use of this motion prediction algorithm with union of convex contact patches for manipulation planning.

%A key observation that we made in our simulation studies is that when we considered four contact patches, the normal impulse on one of the contact patches would go to $0$, making the MNCP solver numerically unstable. However, for three contact patches no such situation was observed. Future work will include attempts to theoretically understand this phenomenon and prove that the evolution of the state of the object will be unique for motion with at most three convex contact patches. 

%\addtolength{\textheight}{-12cm}   % This command serves to balance the column lengths
                                  % on the last page of the document manually. It shortens
                                  % the textheight of the last page by a suitable amount.
                                  % This command does not take effect until the next page
                                  % so it should come on the page before the last. Make
                                  % sure that you do not shorten the textheight too much.

% Here's where you specify the bibliography style file.
% The full file name for the bibliography style file 
% used for an ASME paper is asmems4.bst.
\bibliographystyle{asmems4}

%%%%%%%%%%%%%%%%%%%%%%%%%%%%%%%%%%%%%%%%%%%%%%%%%%%%%%%%%%%%%%%%%%%%%%

%%%%%%%%%%%%%%%%%%%%%%%%%%%%%%%%%%%%%%%%%%%%%%%%%%%%%%%%%%%%%%%%%%%%%%
% The bibliography is stored in an external database file
% in the BibTeX format (file_name.bib).  The bibliography is
% created by the following command and it will appear in this
% position in the document. You may, of course, create your
% own bibliography by using thebibliography environment as in
%
% \begin{thebibliography}{12}
% ...
% \bibitem{itemreference} D. E. Knudsen.
% {\em 1966 World Bnus Almanac.}
% {Permafrost Press, Novosibirsk.}
% ...
% \end{thebibliography}

% Here's where you specify the bibliography database file.
% The full file name of the bibliography database for this
% article is asme2e.bib. The name for your database is up
% to you.

%%%%%%%%%%%%%%%%%%%%%%%%%%%%%%%%%%%%%%%%%%%%%%%%%%%%%%%%%%%%%%%%%%%%%%

\appendix       %%% starting appendix

\section*{Appendix A: Normal Cone}
As shown in Figure~\ref{figure:contact_nonpenetration}, when a contact patch is described by intersection of convex functions, there can be contact points lying at the intersection of multiple functions (vertices and points on edges). The normals at these points are not uniquely defined. For any point $\bm{x}$ that lies at intersection of multiple functions, say $f_i(\bm{x}) = 0$, where $i$ belongs to an index set $II$, we can define a normal cone, $\mathcal{C}(\bm{F},\bm{x})$, that consists of all vectors in the conic hull of the normals for each function of object $F$ at $\bm{x}$ as:
$\mathcal{C}(\bm{F},\bm{x}) = \{ \bm{y} \vert  \bm{y} = \sum_{i \in II} l_i \nabla f_i(\bm{x}), \quad l_i\ge 0\}$,
where $l_i$ are non-negative constants.
%Now we can define $\mathcal{C}(\bm{F},\bm{a}_1)$ as the normal cone at ECP $\bm{a}_1$ for object $F$, and define $\mathcal{C}(\bm{G},\bm{a}_2)$ as the normal cone at ECP $\bm{a}_2$ for object $G$.
Note that when one of the contact patches have a unique normal defined for all points on the patch, this normal can be used as a common normal even if for the other object the contact normal is not uniquely defined. When the contact normal is not unique, we can choose the common normal as any vector that lies in the intersection of normal cone on one object and the negative of the normal cone on the other object. The normal cone for a line or surface also defines the set of supporting hyperplanes to the line or surface~\cite{Rockafellar1997}.
%\begin{enumerate}
%\item For surface contact, the normal cone of any point can lie in the interior or on the boundary of the contact surface. When in the interior of the surface, the cone 
%contains one function of the plane for contact surface and the cone can be written as a normal vector of the contact surface. When on the boundary, although the cone contains multiple functions, it must include the function of the contact surface. Because the curvature of the flat surface is zero, thus we can define one equivalent normal vector for the contact points on surface. 
%\item When the contact patch is line contact, the normal cone of any point can lie in the interior  or on the end points of contact line segment. When in the interior of the line segment, the contact region is the intersection of two surfaces of the object and the cone is the conic combination of the normal vectors of these two surface functions. When on the endpoints of the line segment, the end points are defined by the intersection of multiple surfaces which includes the same two surfaces of the interior case. Thus normal cone of the endpoints must contain the same two functions of surfaces as in the interior case. Because the curvature along this line segment is zero, we can define one equivalent normal cone for the line segment.
%\item The normal cone for the line or surface also defines the set of supporting hyperplanes to the line or surface~\cite{Rockafellar1997}.
%\end{enumerate}
\section*{Appendix B: Separating Hyperplane Theorem}
The separating hyperplane theorem states that two nonempty convex sets in $\mathbb{R}^n$ can be properly separated by a hyperplane if and only if their interiors are disjoint~\cite{Rockafellar1997}. When objects are separate, the normal to the separating hyperplane is along the line joining the closest points on the two sets. When two sets have line or surface contact without intersection, a separating hyperplane is also a supporting hyperplane for the contact line or surface on both the sets. Thus, in this case, the separating hyperplane theorem implies: two non-empty convex objects can have a common supporting hyperplane for the contact line or surface on both sets if and only if their interiors are disjoint. Thus,

%When objects are separate, the normal to a separating hyperplane is along the line joining the ECPs on the two objects. When objects have line or surface contact with each other without intersection, a separating hyper plane is also a supporting hyperplane which is defined by ECP $\bm{a}_1$ or $\bm{a}_2$ for both objects (ECP $\bm{a}_1$ or $\bm{a}_2$ is a point on the contact line or surface). Because any point on a contact line or surface defines the same normal cone, so the entire contact line or surface has the same set of supporting hyperplanes. Thus, in this case, the separating hyperplane theorem implies that: two non-empty convex objects can have a common supporting hyperplane on the contact line or surface of both sets if and only if their interiors are disjoint. 
\begin{enumerate}
\item When the distance between two objects $F$ and $G$ is greater than zero and ECPs $\bm{a}_1$ and $\bm{a}_2$ on $F$ and $G$ are the closest points on the boundary of two convex objects, the vector $\bm{a}_2 - \bm{a}_1$ lies within the set $\mathcal{C}(\bm{F},\bm{a}_1)$ and also within the set $-\mathcal{C}(\bm{G},\bm{a}_2)$, thus $\mathcal{C}(\bm{F},\bm{a}_1) \cap -\mathcal{C}(\bm{G},\bm{a}_2) \neq \emptyset$.

\item When two objects' distance is zero and they have line or surface contact without intersection, ECP $\bm{a}_1$ and $\bm{a}_2$ lie on the contact patch. The intersection of set $\mathcal{C}(\bm{F},\bm{a}_1)$ and set $-\mathcal{C}(\bm{G},\bm{a}_2)$ defines the set of supporting hyperplanes for contact patch on $F$ and $G$. Thus $\mathcal{C}(\bm{F},\bm{a}_1) \cap -\mathcal{C}(\bm{G},\bm{a}_2) \neq \emptyset$. Notice that $\bm{a}_1$ and $\bm{a}_2$ can be any point lies on the contact patch and they do not need to coincide with each other.

\item If the distance is zero and the two objects penetrate each other, ECP $\bm{a}_1$ and $\bm{a}_2$ lie on the contact patch. Then $\mathcal{C}(\bm{F},\bm{a}_1) \cap -\mathcal{C}(\bm{G},\bm{a}_2) = \emptyset$, which implies that there is no hyperplane that can separate the two objects.
\end{enumerate}

%%%%%%%%%%%%%%%%%%%%%%%%%%%%%%%%%%%%%%%%%%%%%%%%%%%%%%%%%%%%%%%%%%%%%%

\end{document}